\documentclass[11pt]{article}

\usepackage{fullpage,times}

\usepackage{amsthm,amsfonts,amsmath,amssymb,epsfig,color,float,graphicx,verbatim}
\usepackage{algorithm,algorithmic}

\newtheorem{theorem}{Theorem}

\newtheorem{lemma}{Lemma}

\newtheorem{remark}{Remark}

\newcommand{\reals}{\mathbb{R}}
\newcommand{\E}{\mathbb{E}}

\newcommand{\bx}{\mathbf{x}}
\newcommand{\bw}{\mathbf{w}}
\newcommand{\bg}{\mathbf{g}}

\newcommand{\bu}{\mathbf{u}}
\newcommand{\bv}{\mathbf{v}}

\newcommand{\Ocal}{\mathcal{O}}

\newcommand{\norm}[1]{\|#1\|}
\newcommand{\inner}[1]{\langle#1\rangle}

\newcommand{\secref}[1]{Sec.~\ref{#1}}

\newcommand{\figref}[1]{Fig.~\ref{#1}}
\renewcommand{\eqref}[1]{Eq.~(\ref{#1})}
\newcommand{\lemref}[1]{Lemma~\ref{#1}}

\newcommand{\thmref}[1]{Thm.~\ref{#1}}



\title{A Stochastic PCA and SVD Algorithm\\ with an Exponential Convergence Rate}
\author{Ohad Shamir\\Weizmann Institute of Science\\\texttt{ohad.shamir@weizmann.ac.il}}
\date{}

\begin{document}

\maketitle

\begin{abstract}
We describe and analyze a simple algorithm for principal component analysis
and singular value decomposition, VR-PCA, which uses computationally cheap
stochastic iterations, yet converges exponentially fast to the optimal
solution. In contrast, existing algorithms suffer either from slow
convergence, or computationally intensive iterations whose runtime scales
with the data size. The algorithm builds on a recent variance-reduced
stochastic gradient technique, which was previously analyzed for strongly
convex optimization, whereas here we apply it to an inherently non-convex
problem, using a very different analysis.
\end{abstract}

\section{Introduction}\label{sec:introduction}

We consider the following fundamental matrix optimization problem: Given a
matrix $X\in \reals^{d\times n}$, we wish to recover its top $k$ left
singular vectors (where $k\ll d$) by solving
\begin{equation}\label{eq:optgeneralproblem}
\max_{W\in \reals^{d\times k}:W^\top W = I}~\frac{1}{n}\norm{X^\top W}_F^2,
\end{equation}
$\norm{\cdot}_F$ being the Frobenius norm and $I$ being the identity
matrix\footnote{The top $k$ right singular values can also be extracted, by
considering the matrix $X^\top$ in lieu of $X$.}. A prominent application in
machine learning and statistics is Principal Component Analysis (PCA), which
is one of the most common tools for unsupervised data analysis and
preprocessing: Given a data matrix $X$ whose columns consist of $n$ instances
in $\reals^d$, we are interested in finding a $k$-dimensional subspace
(specified by a $d\times k$ matrix $W$), on which the projection of the data
has largest possible variance. Finding this subspace has numerous uses, from
dimensionality reduction and data compression to data visualization, and the
problem is extremely well-studied.

Letting $\bx_1,\ldots,\bx_n$ denote the columns of $X$,
\eqref{eq:optgeneralproblem} can be equivalently written as
\begin{equation}\label{eq:optproblem}
\min_{W\in\reals^{d\times k}:W^\top
W=I}-W^\top\left(\frac{1}{n}\sum_{i=1}^{n}\bx_i\bx_i^\top\right)W,
\end{equation}
which reveals that the solution is also the top $k$ eigenvectors of the
covariance matrix $\frac{1}{n}\sum_{i=1}^{n}\bx_i\bx_i^\top$. In this paper,
we will mostly focus on the simplest possible form of this problem, where
$k=1$, in which case the above reduces to
\begin{equation}\label{eq:optproblem1}
\min_{\bw:\norm{\bw}_2=1}
-\bw^\top\left(\frac{1}{n}\sum_{i=1}^{n}\bx_i\bx_i^\top\right)\bw,
\end{equation}
and our goal is to find the top eigenvector $\bv_1$. However, as discussed
later, the algorithm to be presented can be readily extended to solve
\eqref{eq:optproblem} for $k>1$.

When the data size $n$ and the dimension $d$ are modest, this problem can be
solved exactly by a full singular value decomposition of $X$. However, the
required runtime is $\Ocal\left(\min\{nd^2,n^2d\}\right)$, which is
prohibitive in large-scale applications. A common alternative is to use
iterative methods such as power iterations or more sophisticated variants
\cite{GolvaL12}.
If the covariance matrix has bounded spectral norm and an eigengap $\lambda$
between its first and second eigenvalues, then these algorithms can be shown
to produce a unit vector which is $\epsilon$-far from $\bv_1$ (or $-\bv_1$)
after $\Ocal\left(\frac{\log(1/\epsilon)}{\lambda^p}\right)$ iterations
(where e.g. $p=1$ for power iterations). However, each iteration involves
multiplying one or more vectors by the covariance matrix $\frac{1}{n}\sum_i
\bx_i\bx_i^\top$. Letting $d_s\in [0,d]$ denote the average sparsity (number
of non-zero entries) in each $\bx_i$, this requires $\Ocal(d_s n)$ time by
passing through the entire data. Thus, the total runtime is
$\Ocal\left(\frac{d_s n\log(1/\epsilon)}{\lambda^p}\right)$. When $\lambda$
is small, this is equivalent to many passes over the data, which can be
prohibitive for large datasets.

An alternative to these deterministic algorithms are stochastic and
incremental algorithms (e.g.
\cite{krasulina1969method,oja1982simplified,oja1985stochastic} and more
recently,
\cite{ACLS12,mitliagkas2013memory,arora2013stochastic,hardt2014noisy,de2014global}).
In contrast to the algorithms above, these algorithms perform much cheaper
iterations by choosing some $\bx_i$ (uniformly at random or otherwise), and
updating the current iterate using only $\bx_{i}$. In general, the runtime of
each iteration is only $\Ocal(d_s)$. On the flip side, due to their
stochastic and incremental nature, the convergence rate (when known) is quite
slow, with the number of required iterations scaling linearly with
$1/\epsilon$ and additional problem parameters. This is useful for getting a
low to medium-accuracy solution, but is prohibitive when a high-accuracy
solution is required.

In this paper, we propose a new stochastic algorithm for solving
\eqref{eq:optproblem1}, denoted as VR-PCA \footnote{VR stands for
``variance-reduced''.}, which for bounded data and under suitable
assumptions, has provable runtime of
\[
\Ocal\left(d_s\left(n+\frac{1}{\lambda^2}\right)\log\left(\frac{1}{\epsilon}\right)\right).
\]
This algorithm combines the advantages of the previously discussed
approaches, while avoiding their main pitfalls: On one hand, the runtime
depends only logarithmically on the accuracy $\epsilon$, so it is suitable to
get high-accuracy solutions; while on the other hand, the runtime scales as
the \emph{sum} of the data size $n$ and a factor involving the eigengap
parameter $\lambda$, rather than their product. This means that the algorithm
is still applicable when $\lambda$ is relatively small. In fact, as long as
$\lambda\geq \Omega(1/\sqrt{n})$, this runtime bound is better than those
mentioned earlier, and equals $d_s n$ up to logarithmic factors: Proportional
to the time required to perform a single scan of the data.

VR-PCA builds on a recently-introduced technique for stochastic gradient
variance reduction (see \cite{johnson2013accelerating} as well as
\cite{MahZha13,KonRi13}, and \cite{FrosGeKaSi14} in a somewhat different
context). However, the setting in which we apply this technique is quite
different from previous works, which crucially relied on the strong convexity
of the optimization problem, and often assume an unconstrained domain. In
contrast, our algorithm attempts to minimize the function in
\eqref{eq:optproblem1}, which is nowhere convex, let alone strongly convex
(in fact, it is \emph{concave} everywhere). As a result, the analysis in
previous papers is inapplicable, and we require a new and different analysis
to understand the performance of the algorithm.

\section{Algorithm and Analysis}\label{sec:alg}

The pseudo-code of our algorithm appears as Algorithm \ref{alg:alg} below. We
refer to a single execution of the inner loop as an \emph{iteration}, and
each execution of the outer loop as an \emph{epoch}. Thus, the algorithm
consists of several epochs, each of which consists of running $m$ iterations.

\begin{center}
\begin{minipage}{0.6\textwidth}
\begin{algorithm}[H]
\caption{VR-PCA} \label{alg:alg}
\begin{algorithmic}
\STATE \textbf{Parameters:} Step size $\eta$, epoch length $m$ \STATE
\textbf{Input:} Data matrix $X=(\bx_1,\ldots,\bx_n)$; Initial unit vector
$\tilde{\bw}_0$ \FOR{$s=1,2,\ldots$}
  \STATE $\tilde{\bu}=\frac{1}{n}\sum_{i=1}^{n}\bx_i\left(\bx_i^\top \tilde{\bw}_{s-1}\right)$
  \STATE $\bw_0=\tilde{\bw}_{s-1}$
  \FOR{$t=1,2,\ldots,m$}
    \STATE Pick $i_t\in \{1,\ldots,n\}$ uniformly at random
    \STATE $\bw'_{t}=\bw_{t-1}+\eta\left(\bx_{i_t}\left(\bx_{i_t}^\top\bw_{t-1}-\bx_{i_t}^\top\tilde{\bw}_{s-1}\right)+\tilde{\bu}\right)$
    \STATE $\bw_{t}=\frac{1}{\norm{\bw'_t}}\bw'_{t}$
  \ENDFOR
  \STATE $\tilde{\bw}_{s}=\bw_m$
\ENDFOR
\end{algorithmic}
\end{algorithm}
\end{minipage}
\end{center}

To understand the structure of the algorithm, it is helpful to consider first
the well-known Oja's algorithm for stochastic PCA optimization
\cite{oja1982simplified}, on which our algorithm is based. In our setting,
this rule is reduced to repeatedly sampling $\bx_{i_t}$ uniformly at random,
and performing the update
\[
\bw'_{t} = \bw_{t-1}+\eta_t\bx_{i_t}\bx_{i_t}^\top\bw_{t-1}~~,~~
\bw_{t} = \frac{1}{\norm{\bw'_{t}}}\bw_{t}.
\]
Letting $A=\frac{1}{n}XX^\top=\frac{1}{n}\sum_{i=1}^{n}\bx_i\bx_i^\top$, this
can be equivalently rewritten as
\begin{equation}\label{eq:ojaalg}
\bw'_{t} = (I+\eta_t A)\bw_{t-1}+\eta_t\left(\bx_{i_t}\bx_{i_t}^\top-A\right)\bw_{t-1}~~,~~
\bw_{t} = \frac{1}{\norm{\bw'_{t}}}\bw_{t}.
\end{equation}
Thus, at each iteration, the algorithm performs a power iteration (using a
shifted and scaled version of the matrix $A$), adds a stochastic zero-mean
term $\eta_t\left(\bx_{i_t}\bx_{i_t}^\top-A\right)\bw_{t-1}$, and projects
back to the unit sphere. Recently, \cite{balsubramani2013fast} gave a
rigorous finite-time analysis of this algorithm, showing that if
$\eta_t=\Ocal(1/t)$, then under suitable conditions, we get a convergence
rate of $\Ocal(1/T)$ after $T$ iterations.

The reason for the relatively slow convergence rate of this algorithm is the
constant variance of the stochastic term added in each step. Inspired by
recent variance-reduced stochastic methods for convex optimization
\cite{johnson2013accelerating}, we change the algorithm in a way which
encourages the variance of the stochastic term to decay over time.
Specifically, we can rewrite the update in each iteration of our VR-PCA
algorithm as
\begin{equation}\label{eq:ouralg}
\bw'_{t} = (I+\eta A)\bw_{t-1}+\eta\left(\bx_{i_t}\bx_{i_t}^\top-A\right)\left(\bw_{t-1}-\tilde{\bw}_{s-1}\right)~~,~~
\bw_{t} = \frac{1}{\norm{\bw'_{t}}}\bw_{t},
\end{equation}
where $\tilde{\bw}_{s-1}$ is the vector computed at the beginning of each
epoch. Comparing \eqref{eq:ouralg} to \eqref{eq:ojaalg}, we see that our
algorithm also performs a type of power iteration, followed by adding a
stochastic zero-mean term. However, our algorithm picks a fixed step size
$\eta$, which is more aggressive that a decaying step size $\eta_t$.
Moreover, the variance of the stochastic term is no longer constant, but
rather controlled by $\norm{\bw_{t-1}-\tilde{\bw}_{s-1}}$. As we get closer
to the optimal solution, we expect that both $\tilde{\bw}_{s-1}$ and
$\bw_{t-1}$ will be closer and closer to each other, leading to decaying
variance, and a much faster convergence rate, compared to Oja's algorithm.

Before continuing to the algorithm's analysis, we make two important remarks:

\begin{remark}\label{remark:k1}
To generalize the algorithm to find multiple singular vectors (i.e. solve
\eqref{eq:optproblem} for $k>1$), one option is to replace the vectors
$\bw_t,\bw'_t,\tilde{\bw},\tilde{\bu}$ by $d\times k$ matrices
$W_t,W'_t,\tilde{W},\tilde{U}$, and replace the normalization step
$\frac{1}{\norm{\bw'_t}}\bw'_t$ by an orthogonalization step\footnote{I.e.
given $W'_t$, return $W_t$ with the same column space such that $W_t^\top
W_t=I$. Note that the algorithm relies on $W_t$ remaining parameterically
close to previous iterates, and $W'_t$ is a relatively small perturbation of
of an orthogonal $W_{t-1}$. Therefore, it's important to use an
orthogonalization procedure such that $W_t$ is close to $W'_t$ if $W'_t$ is
nearly orthogonal, such as Gram-Schmidt.}. This generalization is completely
analogous to how iterative algorithms such as power iterations and Oja's
algorithm are generalized to the $k>1$ case, and the same intuition discussed
above still holds. This is also the option used in our experiments. Another
option is to recover the singular vectors one-by-one via matrix deflation:
First recover the leading vector $\bv_1$, compute its associated eigenvalue $s_1$, and then iteratively recover the
leading eigenvector and eigenvalue of the deflated matrix
$\frac{1}{n}\sum_{i=1}^{n}\bx_i\bx_i^\top-\sum_{l=1}^{j-1}s_l\bv_l\bv_l^\top$,
which is precisely $\bv_j$. This is a standard method to extend power
iteration algorithms to recover multiple eigenvectors, and our algorithm can
be applied to solve it. Algorithmically, one simply needs to replace each
computation of the form $\bx\bx^\top\bw$ with
$\left(\bx\bx^\top-\sum_{l=1}^{j-1}\bv_l\bv_l^\top\right)\bw$. A disadvantage
of this approach is that it requires a positive eigengap between all top $k$
singular values, otherwise our algorithm is not guaranteed to converge.
\end{remark}

\begin{remark}\label{remark:sparse}
Using a straightforward implementation, the runtime of each iteration is
$\Ocal(d)$, and the total runtime of each epoch is $\Ocal(dm+d_s n)$, where
$d_s$ is the average sparsity of the data points $\bx_i$. However, a more
careful implementation can improve this to $\Ocal(d_s(m+n))$. The trick is to
maintain each $\bw_t$ as $\alpha\bg+\beta\tilde{\bu}$, plus a few additional
scalars, and in each iteration perform only a sparse update of $\bg$, and
updates of the scalars, all in $\Ocal(d_s)$ amortized time. See Appendix
\ref{app:sparse} for more details.
\end{remark}

A formal analysis of the algorithm appears as \thmref{thm:main} below. See
\secref{sec:experiments} for further discussion of the choice of parameters
in practice.

\begin{theorem}\label{thm:main}
  Define $A$ as $\frac{1}{n}XX^\top=\frac{1}{n}\sum_{i=1}^{n}\bx_i\bx_i^\top$, and let $\bv_1$ be an eigenvector corresponding to its largest eigenvalue. Suppose that
  \begin{itemize}
    \item $\max_i\norm{\bx_i}^2\leq r$ for some $r>0$.
    \item $A$ has eigenvalues $s_1>s_2\geq\ldots\geq s_d$, where $s_1-s_2=\lambda$ for some $\lambda>0$.
    \item $\inner{\tilde{\bw}_0,\bv_1}\geq \frac{1}{\sqrt{2}}$.
  \end{itemize}

  Let $\delta,\epsilon\in (0,1)$ be fixed. If we run the algorithm with any epoch length parameter $m$ and step size $\eta$, such that
  \begin{equation}\label{eq:thmcondme}
\eta \leq \frac{c_1\delta^2}{r^2}\lambda~~~~,~~~~
m\geq \frac{c_2\log(2/\delta)}{\eta \lambda}
~~~~,~~~~m\eta^2r^2+r\sqrt{m\eta^2\log(2/\delta)}\leq c_3,
  \end{equation}
  (where $c_1,c_2,c_3$ designates certain positive numerical constants), and for $T=\left\lceil\frac{\log(1/\epsilon)}{\log(2/\delta)}\right\rceil$ epochs,
  then with probability at least $1-2\log(1/\epsilon)\delta$, it holds that
  \[
  \inner{\tilde{\bw}_T,\bv_1}^2 ~\geq~ 1-\epsilon.
  \]
\end{theorem}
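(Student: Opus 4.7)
The plan is to reduce the theorem to a per-epoch contraction and then iterate. Specifically, I will prove: conditional on $\inner{\tilde{\bw}_{s-1},\bv_1}^2\ge 1/2$, a single epoch drives the error geometrically,
$1-\inner{\tilde{\bw}_s,\bv_1}^2\le\frac{\delta}{2}(1-\inner{\tilde{\bw}_{s-1},\bv_1}^2)$, with probability at least $1-2\delta$. This preserves the invariant $\inner{\tilde{\bw}_s,\bv_1}^2\ge 1/2$, so I can iterate unconditionally for $T=\lceil\log(1/\epsilon)/\log(2/\delta)\rceil$ epochs; unrolling and applying a union bound over the $T$ epoch-failure events gives the stated accuracy and confidence (using the initial $\inner{\tilde{\bw}_0,\bv_1}\ge 1/\sqrt{2}$ to bound the starting error by $1$).

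To set up the per-epoch claim, fix $s$ and abbreviate $\tilde{\bw}=\tilde{\bw}_{s-1}$, $B_t=\bx_{i_t}\bx_{i_t}^\top$. Diagonalize $A=\sum_i s_i\bv_i\bv_i^\top$ and decompose $\bw_t=\alpha_t\bv_1+\bbeta_t$ with $\bbeta_t\perp\bv_1$. The update \eqref{eq:ouralg}, projected onto $\bv_1$ and its orthogonal complement, becomes
\[
\inner{\bw'_t,\bv_1}=(1+\eta s_1)\alpha_{t-1}+\eta\xi_t,\qquad
\bbeta'_t=(I+\eta A)\bbeta_{t-1}+\eta\bg_t,
\]
where $\xi_t$ (scalar) and $\bg_t$ (vector) are the projections of the mean-zero noise $(B_t-A)(\bw_{t-1}-\tilde{\bw})$. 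Both have conditional variance at most $r^2\norm{\bw_{t-1}-\tilde{\bw}}^2$ and almost-sure norm at most $2r\norm{\bw_{t-1}-\tilde{\bw}}$. This is where the variance-reduction structure is visible: the noise scales with $\norm{\bw_{t-1}-\tilde{\bw}}$, which is small whenever both iterates are close to $\bv_1$.

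I would then track the scale-invariant ratio $\psi_t=\norm{\bbeta_t}^2/\alpha_t^2=(1-\alpha_t^2)/\alpha_t^2$, so the renormalization step drops out. In the noiseless limit, the eigengap gives
$\psi_t\le\bigl((1+\eta s_2)/(1+\eta s_1)\bigr)^2\psi_{t-1}\le(1-\eta\lambda)\psi_{t-1}$. Including the noise and expanding the ratio yields a stochastic recursion of the form $\psi_t\le(1-\eta\lambda)\psi_{t-1}+\eta M_t+\eta^2 N_t$, where $M_t$ is a martingale difference (the cross terms $\alpha_{t-1}\xi_t$ and $\inner{\bbeta_{t-1},(I+\eta A)\bg_t}$ divided by $\alpha_{t-1}^2$) and $N_t\ge0$ collects the quadratic-in-noise contributions. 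Unrolling, I would apply a Freedman-type inequality to $\sum_t(1-\eta\lambda)^{m-t}\eta M_t$ and a deterministic bound to $\sum_t(1-\eta\lambda)^{m-t}\eta^2 N_t$; the conditions in \eqref{eq:thmcondme} are calibrated so that with probability $\ge1-2\delta$, both sums are at most $(\delta/8)\psi_0$. Combined with the pure geometric factor $(1-\eta\lambda)^m\le\delta/4$ (from $m\ge c_2\log(2/\delta)/(\eta\lambda)$), this gives $\psi_m\le(\delta/2)\psi_0$, hence $1-\alpha_m^2\le\psi_m\le\delta(1-\tilde\alpha^2)$ after using $\tilde\alpha^2\ge1/2$, which is the per-epoch contraction.

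The main obstacle is that this concentration argument needs a uniform a priori bound on $\norm{\bw_t-\tilde{\bw}}$ for $t\le m$, since the martingale variance at step $t$ is itself a function of the (random) iterate. I would close this loop by a stopping-time argument: let $\tau$ be the first step at which $\norm{\bw_t-\tilde{\bw}}$ exceeds, say, $4\sqrt{1-\tilde\alpha^2}$, run the analysis up to $t\wedge\tau$, and show via a separate Azuma/Bernstein bound on the incremental drift $\bw_t-\bw_{t-1}$ that $\tau>m$ with probability $\ge1-\delta$. This is precisely the role of the third condition $m\eta^2r^2+r\sqrt{m\eta^2\log(2/\delta)}\le c_3$ in \eqref{eq:thmcondme}: it controls the accumulated deterministic drift and the martingale deviation of $\bw_t$ around $\tilde{\bw}$ during an epoch. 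Once this a priori bound holds, the martingale concentration on $\psi_t$ goes through cleanly, and the remainder of the argument is bookkeeping.
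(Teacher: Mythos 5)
Your overall architecture mirrors the paper's three-part proof: establish a one-epoch contraction $1-\inner{\tilde\bw_s,\bv_1}^2 \le (\delta/2)\bigl(1-\inner{\tilde\bw_{s-1},\bv_1}^2\bigr)$ with probability at least $1-2\delta$, preserve the invariant $\inner{\tilde\bw_s,\bv_1}^2\geq 1/2$, and iterate with a union bound over $T$ epochs. Where you genuinely depart is the quantity tracked within an epoch. The paper works directly with $b_t = 1-\inner{\bw_t,\bv_1}^2$, which forces it to handle the normalization $\norm{\bw'_t}$ via a second-order Taylor expansion of $(x,y)\mapsto x/(x+y)$ (\lemref{lem:recur}); you track the scale-invariant ratio $\psi_t = \norm{\bbeta_t}^2/\alpha_t^2$, which cancels the normalization outright and makes the noiseless eigengap contraction immediate. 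You also propose a direct Freedman-type bound on the accumulated martingale in the $\psi_t$-recursion in place of the paper's route of bounding $\E[b_m\mid B]$ and paying a $1/\gamma$ factor via Markov (\lemref{lem:recurse}--\lemref{lem:combine}). Both substitutions are defensible and, carried through, would give an analogous per-epoch bound.

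There is, however, a genuine gap in the stopping-time step, which is where the real difficulty of the proof lives. Your stopping time fires when $\norm{\bw_t-\tilde\bw}$ exceeds a multiple of $\sqrt{1-\tilde\alpha^2}$. That controls the noise magnitude, but it does \emph{not} bound $\alpha_t=\inner{\bw_t,\bv_1}$ away from zero --- and such a lower bound is essential for your own recursion, since your martingale differences $M_t$ carry factors $1/\alpha_{t-1}^2$ and the denominator of $\psi_t$ must stay bounded below. With the threshold $4\sqrt{1-\tilde\alpha^2}$, the triangle inequality gives only $\alpha_t \geq \tilde\alpha - 4\sqrt{1-\tilde\alpha^2}$, which is negative at the boundary $\tilde\alpha^2 = 1/2$ of your invariant, so the a~priori bound is vacuous exactly where it is needed. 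A second issue: $\bw_t-\tilde\bw$ is not a martingale --- the power-iteration part of the update adds a systematic drift toward $\bv_1$ --- so the ``Azuma/Bernstein bound on the incremental drift $\bw_t-\bw_{t-1}$'' you invoke to show $\tau>m$ does not apply to that vector as such. The paper avoids both problems at once by defining the good event $B$ as $b_t\leq 3/4$ (equivalently $\alpha_t^2\geq 1/4$) for all $t\leq m$, and proving $\Pr(B)\geq 1-\beta$ via maximal Hoeffding--Azuma applied to the scalar process $b_t$ itself, using the recurrence (\lemref{lem:recur}) to bound its conditional drift and the update rule to bound $|b_{t+1}-b_t|\leq c\eta$ (\lemref{lem:event}); the same event also yields $\inner{\bw_m,\bv_1}\geq 0$, which you need to re-apply the epoch lemma. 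Redefining your stopping time as $\tau=\min\{t:\alpha_t^2<1/4\}$ and running the concentration argument on $1-\alpha_t^2$ (or on $\psi_t$) would close the hole; as written, the argument does not go through.
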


The proof of the theorem is provided in \secref{sec:proof}. It is easy to
verify that for any fixed $\delta$, \eqref{eq:thmcondme} holds for any
sufficiently large $m$ on the order of $\frac{1}{\eta\lambda}$, as long as
$\eta$ is chosen to be sufficiently smaller than $\lambda/r^2$. Therefore, by
running the algorithm for $m=\Theta\left(\left(r/\lambda\right)^2\right)$
iterations per epoch, and $T=\Theta(\log(1/\epsilon))$ epochs, we get
accuracy $\epsilon$ with high probability\footnote{Strictly speaking, this
statement is non-trivial only in the regime of $\epsilon$ where
$\log\left(\frac{1}{\epsilon}\right)\ll \frac{1}{\delta}$, but if $\delta$ is
a reasonably small $(\ll 1)$, then this is the practically relevant regime.
Moreover, as long as the success probability is positive, we can get an
algorithm which succeeds with exponentially high probability by an
amplification argument: Simply run several independent instantiations of the
algorithm, and pick the solution $\bw$ for which
$\bw^\top\left(\frac{1}{n}\sum_{i=1}^{n}\bx_i\bx_i^\top\right)\bw$ is
largest.} $1-2\log(1/\epsilon)\delta$. Since each epoch requires
$\Ocal(d_s(m+n))$ time to implement, we get a total runtime of
\begin{equation}\label{eq:runtime}
\Ocal\left(d_s\left(n+\left(\frac{r}{\lambda}\right)^2\right)\log\left(\frac{1}{\epsilon}\right)\right),
\end{equation}
establishing an exponential convergence rate. If $\lambda/r \geq
\Omega(1/\sqrt{n})$, then the runtime is $\Ocal(d_s n\log(1/\epsilon))$ -- up
to log-factors, proportional to the time required just to scan the data once.

The theorem assumes that we initialize the algorithm with $\tilde{\bw}_0$ for
which $\inner{\tilde{\bw}_0,\bv_1}\geq \frac{1}{\sqrt{2}}$. This is not
trivial, since if we have no prior knowledge on $\bv_1$, and we choose
$\tilde{\bw}_0$ uniformly at random from the unit sphere, then it is
well-known that $|\inner{\tilde{\bw}_0,\bv_1}|\leq \Ocal(1/\sqrt{d})$ with
high probability. Thus, the theorem should be interpreted as analyzing the
algorithm's convergence after an initial ``burn-in'' period, which results in
some $\tilde{\bw}_0$ with a certain constant distance from $\bv_1$. This
period requires a separate analysis, which we leave to future work. However,
since we only need to get to a constant distance from $\bv_1$, the runtime of
that period is independent of the desired accuracy $\epsilon$. Moreover, we
note that in our experiments (see \secref{sec:experiments}), even when
initialized from a random point, no ``burn-in'' period is discernable, and
the algorithm seems to enjoy the same exponential convergence rate starting
from the very first epoch. Finally, since the variance-reduction technique
only kicks in once we are relatively close to the optimum, it is possible to
use some different stochastic algorithm with finite-time analysis, such as
Oja's algorithm (e.g. \cite{balsubramani2013fast}) or
\cite{hardt2014noisy,de2014global} to get to this constant accuracy, from
which point our algorithm and analysis takes over (for example, the algorithm
of \cite{de2014global} would require $\Ocal(d/\lambda^2)$ iterations,
starting from a randomly chosen point, according to their analysis). In any
case, note that some assumption on $\inner{\tilde{\bw}_0,\bv_1}$ being
bounded away from $0$ must hold, otherwise the algorithm may fail to converge
in the worst-case (a similar property holds for power iterations, and follows
from the non-convex nature of the optimization problem).

\section{Experiments}\label{sec:experiments}

We now turn to present some experiments, which demonstrate the performance of
the VR-PCA algorithm. Rather than tuning its parameters, we used the
following fixed heuristic: The epoch length $m$ was set to $n$ (number of
data points, or columns in the data matrix), and $\eta$ was set to
$\eta=\frac{1}{\bar{r}\sqrt{n}}$, where
$\bar{r}=\frac{1}{n}\sum_{i=1}^{n}\norm{\bx_i}^2$ is the average squared norm
of the data. The choice of $m=n$ ensures that at each epoch, the runtime is
about equally divided between the stochastic updates and the computation of
$\tilde{\bu}$. The choice of $\eta$ is motivated by our theoretical analysis,
which requires $\eta$ on the order of $1/(\max_i\norm{\bx_i}^2\sqrt{n})$ in
the regime where $m$ should be on the order of $n$. Also, note that this
choice of $\eta$ can be readily computed from the data, and doesn't require
knowledge of $\lambda$.

\begin{figure}[ht]
\begin{center}
  \includegraphics[trim = 3cm 1cm 3cm 0.4cm, clip=true, scale=0.5]{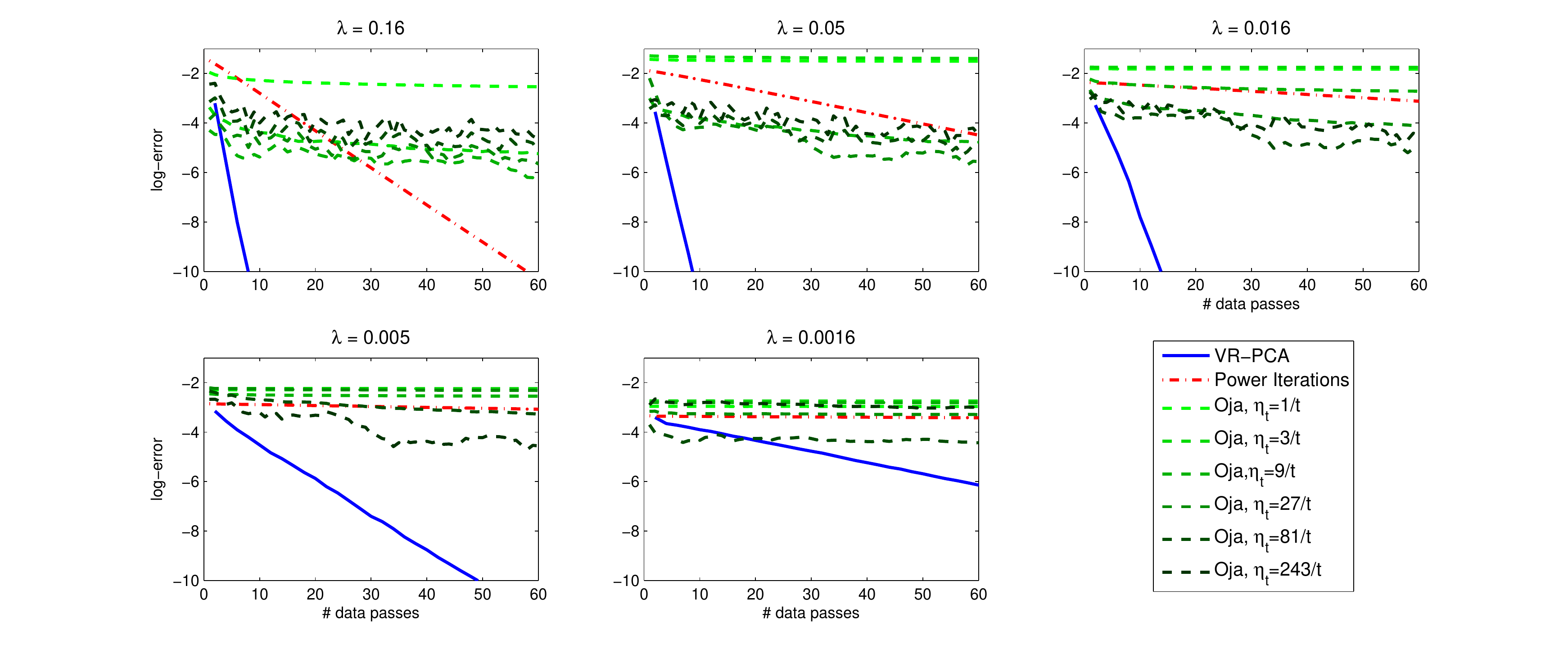}
\end{center}
  \caption{Results for synthetic data. Each plot represents results for a single dataset with eigengap $\lambda$,
  and compares the performance of VR-PCA to power iterations and Oja's algorithm with different step sizes $\eta_t$. In each
  plot, the x-axis represents the number of effective data passes (assuming $2$ per epoch
  for VR-PCA), and the y-axis equals $\log_{10}\left(1-\frac{\norm{X^\top \bw}^2}{\max_{\bv:\norm{\bv}=1}\norm{X^\top \bv}^2}\right)$, where $\bw$
  is the vector obtained so far.}
  \label{fig:synthetic}
\end{figure}

\begin{figure}[ht]
\begin{center}
  \includegraphics[scale=0.6]{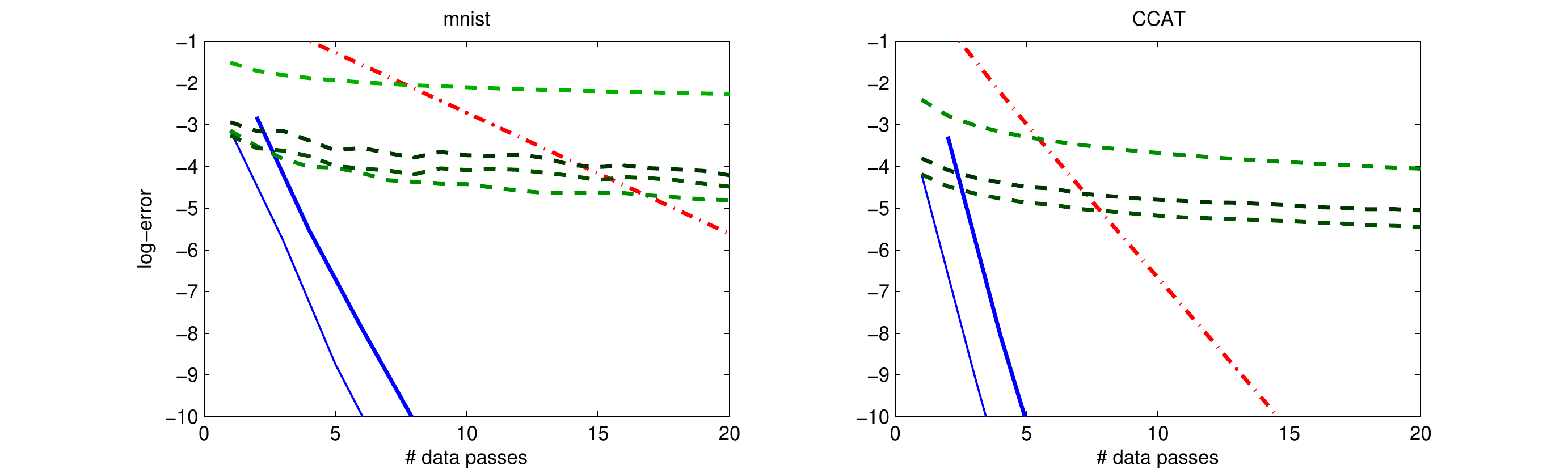}
\end{center}
  \caption{Results for the MNIST and CCAT datasets, using the same algorithms as in \figref{fig:synthetic},
  as well as the hybrid method described in the text (represented by a thinner plain line).
  See \figref{fig:synthetic} for a legend.}
  \label{fig:real}
\end{figure}

\begin{figure}[ht]
\begin{center}
  \includegraphics[scale=0.56]{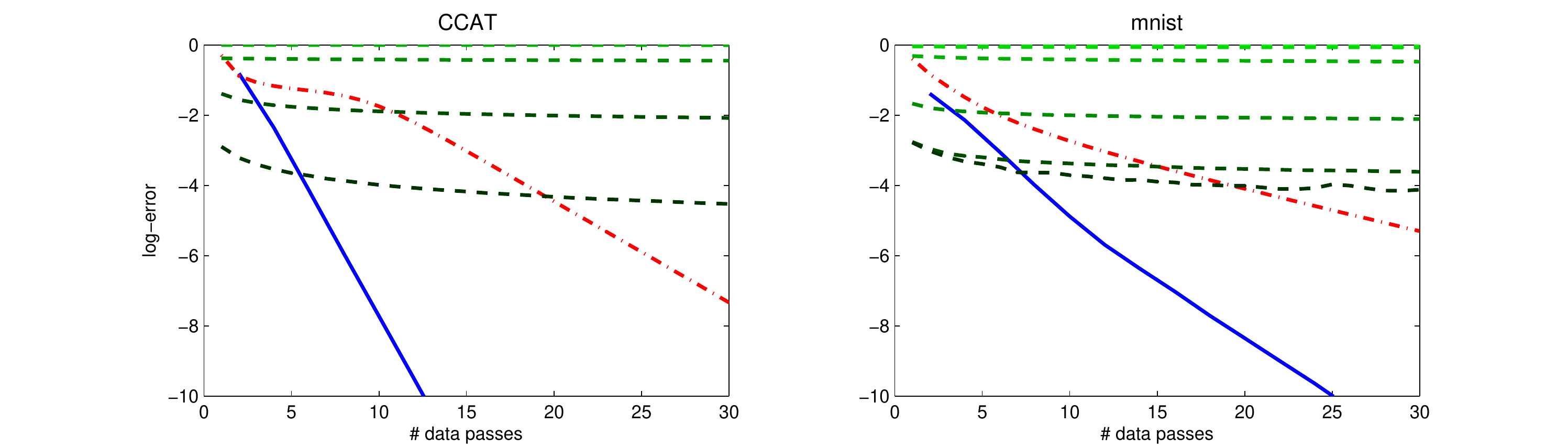}
\end{center}
  \caption{Results for MNIST (for $k=6$ singular vectors) and CCAT (for $k=3$ singular vectors).
  The y-axis here equals $\log_{10}\left(1-\frac{\norm{X^\top W}_F^2}{\max_{V:V^\top V=I}\norm{X^\top V}_F^2}\right)$,
  with $W\in \reals^{d\times k}$ being the current
  iterate. This directly generalizes the performance measure used in previous figures for the $k>1$ case.
  See \figref{fig:synthetic} for a legend.}
  \label{fig:realblock}
\end{figure}

First, we performed experiments on several synthetic random datasets (where
$n=200000,d=10000$), with different choices of eigengap\footnote{For each
choice of $\lambda$, we constructed a $d\times d$ diagonal matrix $D$, with
diagonal $(1,1-\lambda,1-1.1\lambda,\ldots,1-1.4\lambda,q_1,q_2,\ldots)$
where $q_i=|g_i|/d$ and each $g_i$ was chosen according to a standard
Gaussian distribution. We then let $X=UDV^\top$, where $U$ and $V$ are random
$d\times d$ and $n\times d$ orthogonal matrices. This results in a data
matrix $X$ whose spectrum is the same as $D$.} $\lambda$. For comparison, we
also implemented Oja's algorithm, using several different step sizes, as well
as power iterations\footnote{We note that more sophisticated iterative
algorithms, such as the Lanczos method, can attain better performance than
power iterations. However, they are not directly comparable to power
iterations and VR-PCA, since they are inherently more complex and can require
considerably more memory.}. All algorithms were initialized from the same
random vector, chosen uniformly at random from the unit ball. Note that
compared to our analysis, this makes things harder for our algorithm, since
we require it to perform well also in the `burn-in' phase. The results are
displayed in figure \ref{fig:synthetic}, and we see that for all values of
$\lambda$ considered, VR-PCA converges much faster than all versions of Oja's
algorithm, on which it is based, as well as power iterations, even though we
did not tune its parameters. Moreover, since the $y$-axis is in logarithmic
scale, we see that the convergence rate is indeed exponential in general,
which accords with our theory. In contrast, the convergence rate of Oja's
algorithm (no matter which step size is chosen) appears to be
sub-exponential. This is not surprising, since the algorithm does not
leverage the finite nature of the training data, and the inherent variance in
its updates does not decrease exponentially fast. A similar behavior will
occur with other purely stochastic algorithms in the literature, such as
\cite{ACLS12,mitliagkas2013memory,hardt2014noisy,de2014global}.

Next, we performed a similar experiment using the training data of the
well-known MNIST and CCAT datasets. The MNIST data matrix size is $784\times
70000$, and was pre-processed by centering the data and dividing each
coordinate by its standard deviation times the squared root of the dimension.
The CCAT data matrix is sparse (only 0.16\% of entries are non-zero), of size
$23149\times 781265$, and was used as-is. The results appear in figure
\ref{fig:real}. We also present the results for a simple hybrid method, which
initializes the VR-PCA algorithm with the result of running $n$ iterations of
Oja's algorithm. The decaying step size of Oja's algorithm is more suitable
for the initial phase, and the resulting hybrid algorithm can perform better
than each algorithm alone.

Finally, we present a similar experiment on the MNIST and CCAT datasets,
where this time we attempt to recover $k>1$ singular vectors using the
generalization of VR-PCA discussed in remark \ref{remark:k1}. A similar
generalization was also employed with the competitors. The results are
displayed in figure \ref{fig:realblock}, and are qualitatively similar to the
$k=1$ case.

\section{Proof of \thmref{thm:main}}\label{sec:proof}

To simplify the presentation of the proof, we use a few important
conventions:
\begin{itemize}
\item Note that the algorithm remains the same if we divide each $\bx_i$ by
    $\sqrt{r}$, and multiply $\eta$ by $r$. Since $\max_i
    \norm{\bx_i}^2\leq r$, this corresponds to running the algorithm with
    step-size $\eta r$ rather than $\eta$, on a re-scaled dataset of points
    with squared norm at most $1$, and with an eigengap of $\lambda/r$
    instead of $\lambda$. Therefore, we can simply analyze the algorithm
    assuming that $\max_i \norm{\bx_i}^2\leq 1$, and in the end plug in
    $\lambda/r$ instead of $\lambda$, and $\eta r$ instead of $\eta$, to
    get a result which holds for data with squared norm at most $r$.
\item Let $ A= \sum_{i=1}^{d}s_i \bv_i\bv_i^\top $ be an eigendecomposition
    of $A$, where $s_1>s_2\geq\ldots \geq s_d$, $s_1-s_2=\lambda>0$, and
    $\bv_1,\ldots,\bv_d$ are orthonormal vectors. Following the discussion
    above, we assume that $\max_i \norm{\bx_i}^2\leq 1$ and therefore
    $\{s_1,\ldots,s_d\}\subset [0,1]$.
\item Throughout the proof, we use $c$ to designate positive numerical
    constants, whose value can vary at different places (even in the same
    line or expression).
\end{itemize}

\subsection*{Part I: Establishing a Stochastic Recurrence Relation}

We begin by focusing on a single epoch of the algorithm, and a single
iteration $t$, and analyze how $1-\inner{\bw_t,\bv_1}^2$ evolves during that
iteration. The key result we need is the following lemma:

\begin{lemma}\label{lem:recur}
Suppose that $\inner{\bw_t,\bv_1}\geq \frac{1}{2}$, and that
$\inner{\tilde{\bw}_{s-1},\bv_1}\geq 0$. If $\eta\leq
c\lambda$, then
\[
\E\left[\left(1-\inner{\bw_{t+1},\bv_1}^2\right)\middle| \bw_t\right] ~\leq~
\left(1-\frac{\eta\lambda}{16}\right)\left(1-\inner{\bw_t,\bv_1}^2\right)
+c\eta^2\left(1-\inner{\tilde{\bw}_{s-1},\bv_1}^2\right)
\]
for certain positive numerical constants $c$.
\end{lemma}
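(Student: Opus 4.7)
The plan is to strip the nonlinear normalization by using the identity
\[
1 - \inner{\bw_{t+1},\bv_1}^2 \;=\; \frac{\|P\bw'_{t+1}\|^2}{\|\bw'_{t+1}\|^2},\qquad P := I - \bv_1\bv_1^\top,
\]
and then work with the pre-normalized update $\bw'_{t+1} = \by_t + \eta\bz_t$, where $\by_t := (I+\eta A)\bw_t$ is deterministic given $\bw_t$ and $\bz_t := (\bx_{i_t}\bx_{i_t}^\top - A)(\bw_t - \tilde{\bw}_{s-1})$ is mean-zero. If $\bz_t$ were absent, this reduces to a single power iteration step, for which the eigengap $s_1 - s_2 = \lambda$ immediately contracts $1 - \inner{\cdot,\bv_1}^2$; the real task is to show that the noise does not destroy this contraction.

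Writing $\bw_t = \alpha\bv_1 + \sqrt{1-\alpha^2}\,\bu_t$ with $\alpha = \inner{\bw_t,\bv_1}\geq 1/2$ and $\bu_t$ a unit vector in $\mathrm{range}(P)$, and using that $P$ commutes with $I+\eta A$, I expand
\[
\|P\bw'_{t+1}\|^2 = (1-\alpha^2)R_t + 2\eta\sqrt{1-\alpha^2}\,\inner{(I+\eta A)\bu_t, P\bz_t} + \eta^2\|P\bz_t\|^2,
\]
\[
\|\bw'_{t+1}\|^2 = c^2 + 2\eta\inner{\by_t,\bz_t} + \eta^2\|\bz_t\|^2,
\]
with $R_t := \|(I+\eta A)\bu_t\|^2 \leq (1+\eta s_2)^2$ and $c^2 := \|\by_t\|^2 = (1+\eta s_1)^2\alpha^2 + (1-\alpha^2)R_t \geq 1$. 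Because $\|\bx_i\|\leq 1$ and $\|A\|\leq 1$, $\|\bz_t\|\leq 2\|\bw_t-\tilde{\bw}_{s-1}\|$, and a short algebraic calculation using $\alpha\geq 1/2$ and $\gamma := \inner{\tilde{\bw}_{s-1},\bv_1}\geq 0$ gives $\|\bw_t-\tilde{\bw}_{s-1}\|^2 \leq 4(1-\alpha^2)+4(1-\gamma^2)$. In particular $\|\bz_t\|$ is uniformly bounded by a constant, and once $\eta$ is a small enough multiple of $\lambda$, the relative denominator perturbation $|\|\bw'_{t+1}\|^2/c^2 - 1|$ stays below $1/2$.

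With the denominator controlled, I Taylor-expand $1/\|\bw'_{t+1}\|^2 \leq c^{-2}(1 - \Delta_t/c^2 + 2\Delta_t^2/c^4)$ for $\Delta_t := 2\eta\inner{\by_t,\bz_t}+\eta^2\|\bz_t\|^2$, multiply by $\|P\bw'_{t+1}\|^2$, and take the conditional expectation. Crucially, $\E[\bz_t\mid\bw_t]=0$ annihilates the bilinear cross-term in the numerator and turns each surviving product of the form $(\text{linear in }\bz_t)\cdot\inner{\by_t,\bz_t}$ into an $O(\eta)$ second moment of $\bz_t$, which supplies the additional $\eta$ needed to push every noise contribution into the $O(\eta^2)$ regime. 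The outcome is
\[
\E\bigl[1 - \inner{\bw_{t+1},\bv_1}^2 \mid \bw_t\bigr] \;\leq\; \frac{(1-\alpha^2)R_t}{c^2} + C\eta^2\bigl((1-\alpha^2)+(1-\gamma^2)\bigr)
\]
for some numerical constant $C$, and the eigengap finally enters through
\[
\frac{(1-\alpha^2)R_t}{c^2} \;\leq\; \frac{1-\alpha^2}{1 + \bigl((1+\eta s_1)^2/(1+\eta s_2)^2 - 1\bigr)\alpha^2} \;\leq\; \Bigl(1 - \frac{\eta\lambda}{16}\Bigr)(1-\alpha^2),
\]
where the last step uses $s_1-s_2=\lambda$, $\alpha^2\geq 1/4$, and $\eta\lambda$ small. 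Choosing $\eta\leq c\lambda$ small enough to absorb the $C\eta^2(1-\alpha^2)$ residual into the $\eta\lambda/16$ slack yields the advertised recursion.

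The main obstacle I anticipate is the third paragraph: numerator and denominator are correlated quadratic functions of the same random vector $\bz_t$, so a crude $\E[N/D] \leq \E[N]/\min D$ bound would lose an $O(\eta)$ factor that would swamp the $O(\eta\lambda)$ contraction. Getting it right requires tracking the second-order Taylor remainder of $1/\|\bw'_{t+1}\|^2$ together with the full expansion of $\|P\bw'_{t+1}\|^2$, so that the mean-zero property of $\bz_t$ can eliminate every first-order-in-$\eta$ cross-moment.
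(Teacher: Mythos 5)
Your proposal is correct and follows the same strategy as the paper: decompose $\bw'_{t+1}$ into a deterministic power-iteration step $(I+\eta A)\bw_t$ plus the zero-mean noise $\eta\bz_t$, expand the normalization ratio to second order so that $\E[\bz_t\mid\bw_t]=0$ annihilates every $O(\eta)$ cross-term, bound the surviving second-order terms by $O(\eta^2\norm{\bw_t-\tilde{\bw}_{s-1}}^2)$ and then convert via $\norm{\bw_t-\tilde{\bw}_{s-1}}^2\leq 4(1-\inner{\bw_t,\bv_1}^2)+4(1-\inner{\tilde{\bw}_{s-1},\bv_1}^2)$, and read the eigengap contraction off the noise-free main term. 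The only substantive difference is how the second-order remainder is packaged. You work vectorially with $P=I-\bv_1\bv_1^\top$, expand $1/\norm{\bw'_{t+1}}^2$ explicitly around $\norm{(I+\eta A)\bw_t}^2$, multiply out against $\norm{P\bw'_{t+1}}^2$, and track term by term which cross-moments survive the expectation. The paper instead works coordinate-wise in the eigenbasis, writes the target as $f(x,y)=x/(x+y)$ with $x=(a_1+z_1)^2$ and $y=\sum_{i\geq 2}(a_i+z_i)^2$, and controls the entire second-order remainder at once via the crude bound $\E[f(x,y)]\geq f(\E x,\E y)-\max\norm{\nabla^2 f}\cdot\max\norm{(x,y)-(\E x,\E y)}^2$, then separately bounds the Hessian (using $\inner{\bw_t,\bv_1}\geq 1/2$) and the sup of the squared deviation. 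Your route makes it more transparent exactly which first-order cross-terms cancel and why a naive $\E[N]/\min D$ estimate fails (as you note); the paper's route avoids expanding a product of two series at the cost of a coarser sup bound on the deviation. Both land on the same $O(\eta^2\norm{\bw_t-\tilde{\bw}}^2)$ error, both need $\inner{\bw_t,\bv_1}\geq 1/2$ and $\eta$ a small enough constant to validate the expansion, and both finish with essentially the same algebra on $(1+\eta s_1)^2\alpha^2/\bigl((1+\eta s_1)^2\alpha^2+(1-\alpha^2)R\bigr)$ to extract the $\eta\lambda/16$ contraction.
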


\begin{proof}
Since we focus on a particular epoch $s$, let us drop the subscript from
$\tilde{\bw}_{s-1}$, and denote it simply at $\tilde{\bw}$. Rewriting the
update equations from the algorithm, we have that
\[
\bw_{t+1}=\frac{\bw'_{t+1}}{\norm{\bw'_{t+1}}}~,~\text{where}~~~
\bw'_{t+1}=(I+\eta A)\bw_t+\eta(\bx\bx^\top-A)(\bw_t-\tilde{\bw}),
\]
where $\bx$ is the random instance chosen at iteration $t$.

It is easy to verify that
\begin{equation}\label{eq:az}
\inner{\bw'_{t+1},\bv_i} = a_i+z_i,
\end{equation}
where
\[
a_i = (1+\eta s_i)\inner{\bw_t,\bv_i}~~,~~
z_i = \eta \bv_i^\top(\bx\bx^\top-A)(\bw_t-\tilde{\bw}).
\]
Moreover, since $\bv_1,\ldots,\bv_d$ form an orthonormal basis in $\reals^d$,
we have
\begin{equation}\label{eq:az2}
\norm{\bw'_{t+1}}^2 = \sum_{i=1}^{d}\inner{\bv_i,\bw'_{t+1}}^2 = \sum_{i=1}^{d}(a_i+z_i)^2.
\end{equation}

Let $\E$ denote expectation with respect to $\bx$, conditioned on $\bw_t$.
Combining \eqref{eq:az} and \eqref{eq:az2}, we have
\begin{equation}
  \E\left[\inner{\bw_{t+1},\bv_1}^2\right] =
  \E\left[\inner{\frac{\bw'_{t+1}}{\norm{\bw'_{t+1}}},\bv_1}^2\right] =
  \E\left[\frac{\inner{\bw'_{t+1},\bv_1}^2}{\norm{\bw'_{t+1}}^2}\right] =
  \E\left[\frac{(a_1+z_1)^2}{\sum_{i=1}^{d}(a_i+z_i)^2}\right].\label{eq:az3}
\end{equation}
Note that conditioned on $\bw_t$, the quantities $a_1\ldots a_d$ are fixed,
whereas $z_1\ldots z_d$ are random variables (depending on the random choice
of $\bx$) over which we take an expectation.

The first step of the proof is to simplify \eqref{eq:az3}, by pushing the
expectations inside the numerator and the denominator. Of course, this may
change the value of the expression, so we need to account for this change
with some care. To do so, define the auxiliary non-negative random variables
$x,y$ and a function $f(x,y)$ as follows:
\[
x = (a_1+z_1)^2~~,~~ y = \sum_{i=2}^{d}(a_i+z_i)^2~~,~~f(x,y) = \frac{x}{x+y}.
\]
Then we can write \eqref{eq:az3} as $\E_{x,y}[f(x,y)]$. We now use a
second-order Taylor expansion to relate it to
$f(\E[x],\E[y])=\frac{\E[(a_1+z_1)^2]}{\E[\sum_{i=1}^{d}(a_i+z_i)^2]}$.
Specifically, we have that $\E_{x,y}[f(x,y)]$ can be lower bounded by
\begin{align}
\E_{x,y} &\left[f(\E[x],\E[y])+\nabla f(\E[x],\E[y])^\top\left({x\choose y}-{\E[x]\choose \E[y]}\right)-\max_{x,y}\norm{\nabla^2 f(x,y)}\max_{x,y}\left\|{x\choose y}-{\E[x]\choose \E[y]}\right\|^2\right]\notag\\
&=f(\E[x],\E[y])-\max_{x,y}\norm{\nabla^2 f(x,y)}\max_{x,y}\left\|\left(\begin{array}{c}x-\E[x]\\y-\E[y]\end{array}\right)\right\|^2,\label{eq:taylor}
\end{align}
where $\nabla^2 f(x,y)$ is the Hessian of $f$ at $(x,y)$.

We now upper bound the two max-terms in the expression above:
\begin{itemize}
\item For the first max-term, it is easily verified that
\[
\nabla^2 f(x,y) = \frac{1}{(x+y)^3}\left(
                    \begin{array}{cc}
                      -2y & x-y \\
                      x-y & 2x \\
                    \end{array}
                  \right).
\]
Since the spectral norm is upper bounded by the Frobenius norm, which for
$2\times 2$ matrices is upper bounded by $2$ times the magnitude of the
largest entry in the matrix (which in our case is at most
$2(x+y)/(x+y)^3=2/(x+y)^2 \leq 2/x^2$), we have
\[
\max_{x,y}\norm{\nabla^2 f(x,y)}\leq \max_{x} \frac{4}{x^2}=\max_{z_1}\frac{4}{(a_1+z_1)^2}.
\]
Now, recall that $a_1\geq \frac{1}{2}$ by the Lemma's assumptions, and in
contrast $|z_1|\leq \eta
\left|\bv_i^\top(\bx\bx^\top-A)(\bw_t-\tilde{\bw})\right|\leq
\eta\norm{\bv_i}\norm{\bx\bx^\top-A}\norm{\bw_t-\tilde{\bw}}\leq c\eta$, so
for $\eta$ sufficiently small, $|z_1|\leq \frac{1}{2}|a_1|$, and we can
upper bound $\frac{4}{(a_1+z_1)^2}$ (and hence $\max_{x,y}\norm{\nabla^2
f(x,y)}$) by some numerical constant $c$. Overall, we have
\begin{equation}\label{eq:tmaz0}
\max_{x,y}\norm{\nabla^2
f(x,y)}\leq c.
\end{equation}

\item For the second max-term in \eqref{eq:taylor}, recalling that
    $x=(a_1+z_1)^2$, $y=\sum_{i=2}^{d}(a_i+z_i)^2$, and that the $z_i$'s
    are zero-mean, we have
\begin{align*}
  \max_{x,y}\left((x-\E[x])^2+(y-\E[y])^2\right)~&=~
  \max_{z_1\ldots z_d}\left(2a_1 z_1+z_1^2-\E[z_1^2]\right)^2+\left(\sum_{i=2}^{d}\left(2a_i z_i+z_i^2-\E[z_i^2]\right)\right)^2.
\end{align*}
Using the elementary fact that $(r+s)^2\leq 2(r^2+s^2)$ for all $r,s$, as
well as the definition of $a_i,z_i$, this is at most
\begin{align}
&2\max_{z_1}(2a_1z_1)^2 +2\max_{z_1}(z_1^2-\E[z_1^2])^2+2\max_{z_2,\ldots,z_d}\left(\sum_{i=2}^{d}2a_i z_i\right)^2+
  2\max_{z_2,\ldots,z_d}\left(\sum_{i=2}^{d}(z_i^2-\E[z_i^2])\right)^2\notag\\
&\leq 8\max_{z_1}(a_1z_1)^2+4\max_{z_1}(z_1^2)^2+4\E[z_1^2]^2+8\max_{z_2,\ldots,z_d}\left(\sum_{i=2}^{d}a_i z_i\right)^2+4\max_{z_2,\ldots,z_d}\left(\sum_{i=2}^{d}z_i^2\right)+4\left(\sum_{i=2}^{d}\E[z_i^2]\right)^2\notag\\
&\leq
8\max_{z_1}(a_1z_1)^2+8\max_{z_1}z_1^4+8\max_{z_2,\ldots,z_d}\left(\sum_{i=2}^{d}a_i
z_i\right)^2+8\max_{z_2,\ldots,z_d}\left(\sum_{i=2}^{d}z_i^2\right)^2.
\label{eq:tmaz1}
\end{align}
Recalling the definition of $a_i,z_i$, and that
$\norm{\bw_t}$,$\norm{\tilde{\bw}}$,$\norm{\bv_1}$,$\eta s_i$ and
$\norm{\bx\bx^\top-A}$ are all bounded by constants, we now show that each
term in the expression above can be upper bounded by
$c\eta^2\norm{\bw_t-\tilde{\bw}}^2$ for some appropriate constant $c$:
\begin{align*}
  8\max_{z_1}(a_1z_1)^2&= 8\max_{\bx}\left(\eta(1+\eta s_1)\inner{\bw_t,\bv_1}\bv_1^\top(\bx\bx^\top-A)(\bw_t-\tilde{\bw})\right)^2\\
  &\leq 8\max_{\bx}\left(\eta(1+\eta s_1)|\inner{\bw_t,\bv_1}|\norm{\bv_1}\norm{\bx\bx^\top-A}\norm{\bw_t-\tilde{\bw}}\right)^2\\
  &\leq c\left(\eta\norm{\bw_t-\tilde{\bw}}\right)^2~=~c\eta^2\norm{\bw_t-\tilde{\bw}}^2.
\end{align*}
\begin{align*}
  8\max_{z_1}z_1^4&= 8\max_{\bx}\left(\eta\bv_1^\top(\bx\bx^\top-A)(\bw_t-\tilde{\bw})\right)^4\\
  &\leq 8\left(\eta\norm{\bv_1}\norm{\bx\bx^\top-A}\norm{\bw_t-\tilde{\bw}}\right)^4\\
  &\leq c(\eta\norm{\bw_t-\tilde{\bw}})^4~=~ c(\eta\norm{\bw_t-\tilde{\bw}})^2(\eta\norm{\bw_t-\tilde{\bw}})^2\\
  &\leq c(\eta\norm{\bw_t-\tilde{\bw}})^2 = c\eta^2\norm{\bw_t-\tilde{\bw}}^2.
\end{align*}
\begin{align*}
  8\max_{z_2,\ldots,z_d}\left(\sum_{i=2}^{d}a_i z_i\right)^2&=
  8\max_{\bx}\left(\sum_{i=2}^{d}\eta(1+\eta s_i)\inner{\bw_t,\bv_i}\bv_i^\top(\bx\bx^\top-A)(\bw_t-\tilde{\bw})\right)^2\\
  &\leq c\left(\eta\left\|\sum_{i=2}^{d}(1+\eta s_i)\inner{\bw_t,\bv_i}\bv_i\right\|\norm{\bw_t-\tilde{\bw}}\right)^2\\
  &= c\left(\eta\left\|\left(\sum_{i=2}^{d}(1+\eta s_i)\bv_i\bv_i^\top\right) \bw_t\right\|\norm{\bw_t-\tilde{\bw}}\right)^2\\
  &\leq c\left(\eta\left\|\left(\sum_{i=2}^{d}(1+\eta s_i)\bv_i\bv_i^\top\right)\right\|\norm{\bw_t-\tilde{\bw}}\right)^2\\
  &\leq c\left(\eta\norm{\bw_t-\tilde{\bw}}\right)^2~=~ c\eta^2\norm{\bw_t-\tilde{\bw}}^2,
\end{align*}
where in the last inequality we used the fact that $\bv_2\ldots \bv_d$ are
orthonormal vectors, and $(1+\eta s_i)$ is bounded by a constant. Similarly,
\begin{align*}
  8\max_{z_2,\ldots,z_d}\left(\sum_{i=2}^{d}z_i^2\right)^2&=
  8\max_{\bx}\left(\eta^2\sum_{i=2}^{d}(\bw_t-\tilde{\bw})^\top (\bx\bx^\top-A)\bv_i\bv_i^\top (\bx\bx^\top-A)(\bw_t-\tilde{\bw})\right)^2\\
  &=  8\max_{\bx}\left(\eta^2(\bw_t-\tilde{\bw})^\top (\bx\bx^\top-A)\left(\sum_{i=2}^{d}\bv_i\bv_i^\top\right) (\bx\bx^\top-A)(\bw_t-\tilde{\bw})\right)^2\\
  &\leq 8\max_{\bx}\left(\eta^2\norm{\bw_t-\tilde{\bw}}^2\norm{\bx\bx^\top-A}^2\left\|\sum_{i=2}^{d}\bv_i\bv_i^\top\right\|\right)^2\\
  &\leq c\left(\eta^2\norm{\bw_t-\tilde{\bw}}^2\right)^2 ~=~ c\left(\eta^2\norm{\bw_t-\tilde{\bw}}^2\right)\left(\eta^2\norm{\bw_t-\tilde{\bw}}^2\right)
  ~\leq~ c\eta^2\norm{\bw_t-\tilde{\bw}}^2.
\end{align*}
Plugging these bounds back into \eqref{eq:tmaz1}, we get that
\begin{equation}\label{eq:tmaz05}
\max_{x,y}\left((x-\E[x])^2+(y-\E[y])^2\right)\leq
c\eta^2\norm{\bw_t-\tilde{\bw}}^2
\end{equation}
for some appropriate constant $c$.
\end{itemize}

Plugging \eqref{eq:tmaz0} and \eqref{eq:tmaz1} back into \eqref{eq:taylor},
we get a lower bound of
\[
\E_{x,y}[f(x,y)]\geq f(\E[x],\E[y])-c\eta^2\norm{\bw_t-\tilde{\bw}}^2
= \frac{\E\left[(a_1+z_1)^2\right]}{\E\left[\sum_{i=1}^{d}(a_i+z_i)^2\right]}-c\eta^2\norm{\bw_t-\tilde{\bw}}^2,
\]
and since each $z_i$ is zero-mean, this equals
\begin{equation}\label{eq:interbound}
\frac{\E\left[a_1^2+z_1^2\right]}{\E\left[\sum_{i=1}^{d}(a_i^2+z_i^2)\right]}-c\eta^2\norm{\bw_t-\tilde{\bw}}^2
\end{equation}
By definition of $z_i$ and the fact that $\bv_1,\ldots,\bv_d$ are orthonormal
(hence $\sum_i \bv_i\bv_i^\top$ is the identity matrix), we have
\begin{align*}
\sum_{i=1}^{d}z_i^2&=
\eta^2(\bw_t-\tilde{\bw})^\top(\bx\bx^\top-A)\left(\sum_{i=1}^{d}\bv_i\bv_i^\top\right)(\bx\bx^\top-A)(\bw_t-\tilde{\bw})\\
&=
\eta^2(\bw_t-\tilde{\bw})^\top(\bx\bx^\top-A)(\bx\bx^\top-A)(\bw_t-\tilde{\bw})\\
&=
\eta^2\norm{(\bx\bx^\top-A)(\bw_t-\tilde{\bw})}^2~\leq~ c\eta^2\norm{\bw_t-\tilde{\bw}}^2,
\end{align*}
so we can lower bound \eqref{eq:interbound} by
\begin{equation}\label{eq:interbound2}
\frac{a_1^2}{\sum_{i=1}^{d}a_i^2+c\eta^2\norm{\bw_t-\tilde{\bw}}^2}-c\eta^2\norm{\bw_t-\tilde{\bw}}^2.
\end{equation}
Focusing on the first term in \eqref{eq:interbound2} for the moment, and
substituting in the definition of $a_i$, we can write it as
\begin{align*}
&\frac{(1+\eta s_1)^2\inner{\bw_t,\bv_1}^2}
{(1+\eta s_1)^2\inner{\bw_t,\bv_1}^2+\sum_{i=2}^{d}(1+\eta s_i)^2\inner{\bv_i,\bw_t}^2+c\eta^2\norm{\bw_t-\tilde{\bw}}^2}\\
&\geq \frac{\inner{\bw_t,\bv_1}^2}
{\inner{\bw_t,\bv_1}^2+\left(\frac{1+\eta s_2}{1+\eta s_1}\right)^2\sum_{i=2}^{d}\inner{\bv_i,\bw_t}^2+c\eta^2\norm{\bw_t-\tilde{\bw}}^2}\\
&= \frac{\inner{\bw_t,\bv_1}^2}
{\inner{\bw_t,\bv_1}^2+\left(\frac{1+\eta s_2}{1+\eta s_1}\right)^2\left(1-\inner{\bw_t,\bv_1}^2\right)+c\eta^2\norm{\bw_t-\tilde{\bw}}^2}\\
&=\frac{\inner{\bw_t,\bv_1}^2}
{1-\left(1-\left(\frac{1+\eta s_2}{1+\eta s_1}\right)^2\right)\left(1-\inner{\bw_t,\bv_1}^2\right)+c\eta^2\norm{\bw_t-\tilde{\bw}}^2}\\
&\geq \inner{\bw_t,\bv_1}^2\left(1+\left(1-\left(\frac{1+\eta s_2}{1+\eta s_1}\right)^2\right)\left(1-\inner{\bw_t,\bv_1}^2\right)-c\eta^2\norm{\bw_t-\tilde{\bw}}^2\right),
\end{align*}
where in the last step we used the elementary inequality $\frac{1}{1-x}\geq
1+x$ for all $x\leq 1$ (and this is indeed justified since
$\inner{\bw_t,\bv_1}\leq 1$ and $\frac{1+\eta s_2}{1+\eta s_1}\leq 1$). This
can be further lower bounded by
\begin{align*}
&\inner{\bw_t,\bv_1}^2\left(1+\left(1-\left(\frac{1+\eta s_2}{1+\eta s_1}\right)\right)\left(1-\inner{\bw_t,\bv_1}^2\right)-c\eta^2\norm{\bw_t-\tilde{\bw}}^2\right)\\
&= \inner{\bw_t,\bv_1}^2\left(1+\frac{\eta(s_1-s_2)}{1+\eta s_1}\left(1-\inner{\bw_t,\bv_1}^2\right)-c\eta^2\norm{\bw_t-\tilde{\bw}}^2\right)\\
&\geq \inner{\bw_t,\bv_1}^2\left(1+\frac{\eta\lambda}{2}
\left(1-\inner{\bw_t,\bv_1}^2\right)-c\eta^2\norm{\bw_t-\tilde{\bw}}^2\right),
\end{align*}
where in the last inequality we used the fact that $s_1-s_2=\lambda$ and that $\eta s_1\leq \eta$
which is at most $1$ (again using the assumption that $\eta$ is sufficiently small).

Plugging this lower bound on the first term in \eqref{eq:interbound2}, and
recalling that $\inner{\bw_t,\bv_1}^2$ is assumed to be at least $1/4$, we
get the following lower bound on \eqref{eq:interbound2}:
\begin{align*}
&\inner{\bw_t,\bv_1}^2\left(1+\frac{\eta\lambda}{2}\left(1-\inner{\bw_t,\bv_1}^2\right)-c\eta^2\norm{\bw_t-\tilde{\bw}}^2\right)
-c\eta^2\norm{\bw_t-\tilde{\bw}}^2\\
&\geq~\inner{\bw_t,\bv_1}^2\left(1+\frac{\eta\lambda}{2}\left(1-\inner{\bw_t,\bv_1}^2\right)-c\eta^2\norm{\bw_t-\tilde{\bw}}^2\right).
\end{align*}

To summarize the derivation so far, starting from \eqref{eq:az3} and
concatenating the successive lower bounds we have derived, we get that
\begin{equation}\label{eq:az8}
\E[\inner{\bw_{t+1},\bv_1}^2] \geq \inner{\bw_t,\bv_1}^2\left(1+\frac{\eta\lambda}{2}\left(1-\inner{\bw_t,\bv_1}^2\right)
-c\eta^2\norm{\bw_t-\tilde{\bw}}^2\right).
\end{equation}

We now get rid of the $\norm{\bw_t-\tilde{\bw}}^2$ term, by noting that since
$(x+y)^2\leq 2(x^2+y^2)$ and $\norm{\bw_t}=\norm{\bv_1}=1$,
\begin{align*}
\norm{\bw_t-\tilde{\bw}}^2&\leq \left(\norm{\bw_t-\bv_1}+\norm{\tilde{\bw}-\bv_1}\right)^2 \leq
2\left(\norm{\bw_t-\bv_1}^2+\norm{\tilde{\bw}-\bv_1}^2\right) \\&=
2\left(2-2\inner{\bw_t,\bv_1}+2-2\inner{\tilde{\bw},\bv_1}\right).
\end{align*}
Since we assume that $\inner{\bw_t,\bv_1},\inner{\tilde{\bw},\bv_1}$ are both
positive, and they are also at most $1$, this is at most
\[
2\left(2-2\inner{\bw_t,\bv_1}^2+2-2\inner{\tilde{\bw},\bv_1}^2\right)=4\left(1-\inner{\bw_t,\bv_1}^2\right)+4\left(1-\inner{\tilde{\bw},\bv_1}^2\right).
\]
Plugging this back into \eqref{eq:az8}, we get that
\[
\E[\inner{\bw_{t+1},\bv_1}^2] \geq \inner{\bw_t,\bv_1}^2\left(1+\left(\frac{\eta\lambda}{2}-c\eta^2\right)\left(1-\inner{\bw_t,\bv_1}^2\right)
-c\eta^2\left(1-\inner{\tilde{\bw},\bv_1}^2\right)\right),
\]
and since we can assume $\frac{\eta\lambda}{2}-c\eta^2 \geq
\frac{\eta\lambda}{4}$ by picking $\eta$ sufficiently smaller than $\lambda$,
this can be simplified to
\[
\E[\inner{\bw_{t+1},\bv_1}^2] \geq \inner{\bw_t,\bv_1}^2\left(1+\frac{\eta\lambda}{4}\left(1-\inner{\bw_t,\bv_1}^2\right)
-c\eta^2\left(1-\inner{\tilde{\bw},\bv_1}^2\right)\right).
\]
Finally, subtracting both sides of the inequality from $1$, we get
\begin{align*}
  \E[1-\inner{\bw_{t+1},\bv_1}^2] &\leq 1-\inner{\bw_t,\bv_1}^2-\frac{\eta\lambda}{4}\inner{\bw_t,\bv_1}^2\left(1-\inner{\bw_t,\bv_1}^2\right)
  +c\eta^2\inner{\bw_t,\bv_1}^2\left(1-\inner{\tilde{\bw},\bv_1}^2\right)\\
  &\leq \left(1-\frac{\eta\lambda}{4}\inner{\bw_t,\bv_1}^2\right)\left(1-\inner{\bw_t,\bv_1}^2\right)+c\eta^2\left(1-\inner{\tilde{\bw},\bv_1}^2\right),
\end{align*}
and since we assume $\inner{\bw_t,\bv_1}\geq \frac{1}{2}$, we can upper bound
this by
\[
\left(1-\frac{\eta\lambda}{16}\right)
\left(1-\inner{\bw_t,\bv_1}^2\right)
+c\eta^2\left(1-\inner{\tilde{\bw},\bv_1}^2\right)
\]
as required. Note that to get this bound, we assumed at several places that
$\eta$ is smaller than either a constant, or a constant factor times
$\lambda$ (which is at most $1$). Hence, the bound holds by assuming
$\eta\leq c\lambda$ for a sufficiently small constant $c$.
\end{proof}

\subsection*{Part II: Solving the Recurrence Relation for a Single Epoch}

As before, since we focus on a single epoch, we drop the subscript from
$\tilde{\bw}_{s-1}$ and denote it simply as $\tilde{\bw}$.

Suppose that $\eta=\alpha\lambda$, where $\alpha$ is a sufficiently small
constant to be chosen later. Also, let
\[
b_t = 1-\inner{\bw_t,\bv_1}^2~~~\text{and}~~~ \tilde{b} = 1-\inner{\tilde{\bw},\bv_1}^2.
\]
Then \lemref{lem:recur} tells us that if $\alpha$ is sufficiently small,
$b_t\leq \frac{3}{4}$, and $\inner{\tilde{\bw},\bv_1}\geq 0$, then
\begin{equation}\label{eq:bform}
\E\left[b_{t+1}\middle| \bw_t\right] ~\leq~
\left(1-\frac{\alpha}{16}\lambda^2\right)b_t
+c\alpha^2\lambda^2\tilde{b}.
\end{equation}

\begin{lemma}\label{lem:recurse}
Let $B$ be the event that $b_t\leq \frac{3}{4}$ for all $t=0,1,2,\ldots,m$.
Then for certain positive numerical constants $c$, if $\alpha\leq c$, and
$\inner{\tilde{\bw},\bv_1}\geq 0$, then
\[
\E[b_{m}|B,\bw_0]\leq \left(\left(1-\frac{\alpha}{16}\lambda^2\right)^{m}+c\alpha\right) \tilde{b}.
\]
\end{lemma}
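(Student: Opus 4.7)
}

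The plan is to iterate the one--step recurrence \eqref{eq:bform} across the $m$ iterations of the epoch. Since $\bw_0=\tilde{\bw}$ at the start of the epoch, we have $b_0=\tilde{b}$, which is a clean initial condition. Writing $\gamma=1-\tfrac{\alpha}{16}\lambda^2$ and $C=c\alpha^2\lambda^2\tilde{b}$, \eqref{eq:bform} says that on $\{b_t\le 3/4\}$,
\[
\E[b_{t+1}\mid \bw_t]\;\le\; \gamma\, b_t + C.
\]
If this held unconditionally, straightforward telescoping would give
\[
\E[b_m\mid \bw_0]\;\le\;\gamma^m b_0 + C\sum_{k=0}^{m-1}\gamma^k \;\le\;\gamma^m \tilde{b}+\frac{C}{1-\gamma}\;=\;\bigl(\gamma^m+16c\alpha\bigr)\tilde{b},
\]
since $1-\gamma=\tfrac{\alpha}{16}\lambda^2$, which already matches the target bound.

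The main obstacle is that \eqref{eq:bform} only applies when $b_t\le 3/4$, so naive iteration is illegal. To handle this, I would introduce the stopped/truncated process $Y_t:=b_t\mathbf{1}_{B_t}$, where $B_t=\{b_0,\ldots,b_t\le 3/4\}$, so that $B=B_m$ and $B_{t+1}\subseteq B_t$. Then
\[
\E[Y_{t+1}\mid \mathcal{F}_t]\;=\;\E\!\left[b_{t+1}\mathbf{1}_{B_{t+1}}\mid\mathcal{F}_t\right]\;\le\;\mathbf{1}_{B_t}\E[b_{t+1}\mid\mathcal{F}_t]\;\le\;\gamma Y_t + C\,\mathbf{1}_{B_t},
\]
where the final inequality uses \eqref{eq:bform} (whose hypothesis is met on $B_t$) and $Y_t=b_t$ on $B_t$. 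Taking unconditional expectations and iterating yields
\[
\E[Y_m]\;\le\;\gamma^m\tilde{b}+C\sum_{k=0}^{m-1}\gamma^{m-1-k}\mathbb{P}(B_k)\;\le\;\bigl(\gamma^m+16c\alpha\bigr)\tilde{b}.
\]
Since $\E[Y_m]=\E[b_m\mathbf{1}_B\mid\bw_0]$, dividing by $\mathbb{P}(B\mid\bw_0)$ recovers $\E[b_m\mid B,\bw_0]$ up to that factor; in the flow of the argument the next part of the proof will show $\mathbb{P}(B)$ is bounded below by a constant close to $1$ (via a concentration/martingale bound on the deviation of $b_t$), so the $1/\mathbb{P}(B)$ inflation is absorbed harmlessly into the constant $c$ in the $c\alpha$ term.

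A subsidiary point to check is that \eqref{eq:bform} is stated for the regime $\inner{\bw_t,\bv_1}\ge 1/2$, not merely $\inner{\bw_t,\bv_1}^2\ge 1/4$. Under the standing hypothesis $\inner{\tilde{\bw},\bv_1}\ge 0$ and starting from $\bw_0=\tilde{\bw}$, the sign of $\inner{\bw_t,\bv_1}$ cannot flip inside $B_t$: the update \eqref{eq:ouralg} is a perturbation of order $\eta$ of $\bw_t$, and $\eta\le c\lambda\le c$ is small enough that $|\inner{\bw_{t+1},\bv_1}-\inner{\bw_t,\bv_1}|<1/2$, so on $B_t$ the sign is preserved by continuity. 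Hence the strengthened hypothesis of Lemma~\ref{lem:recur} holds throughout the iteration, and the recursion above is justified on $B$.
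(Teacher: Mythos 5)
Your route via the truncated process $Y_t = b_t\mathbf{1}_{B_t}$ is genuinely different from the paper's. The paper conditions on $B$ throughout and iterates the recurrence directly on $\E[b_t\mid B, \bw_0]$: it argues $\E[b_{t+1}\mid\bw_t, B]\le \gamma\, b_t + C$, takes an outer expectation conditioned on $B$ to get $\E[b_{t+1}\mid B]\le \gamma\,\E[b_t\mid B]+C$, and unwinds with the deterministic anchor $\E[b_0\mid B]=\tilde b$. This produces the bound with coefficient exactly $1$ in front of $\gamma^m\tilde b$ and no normalization by $\Pr(B)$. Your version avoids conditioning on the full event $B$ (which includes future iterates) by working with the stopped indicator $\mathbf{1}_{B_t}$, which is arguably cleaner probabilistically, but it pays for that by delivering $\E[b_m\mathbf{1}_B\mid\bw_0]$ rather than $\E[b_m\mid B,\bw_0]$.

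The gap is in the last step, where you divide by $\Pr(B\mid\bw_0)$. The resulting bound is
\[
\E[b_m\mid B,\bw_0]\;\le\;\frac{1}{\Pr(B\mid\bw_0)}\Bigl(\gamma^m+16c\alpha\Bigr)\tilde b,
\]
and the $1/\Pr(B)$ factor multiplies \emph{both} terms, including $\gamma^m\tilde b$. Your claim that the inflation is ``absorbed harmlessly into the constant $c$ in the $c\alpha$ term'' is not correct: with $\Pr(B)\ge 1-\beta$ you get at best a coefficient $(1-\beta)^{-1}>1$ in front of $\gamma^m\tilde b$, which is not of the lemma's stated form $(\gamma^m+c\alpha)\tilde b$ for any choice of numerical constant $c$. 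In practice this constant inflation does not break the downstream argument in Part III (which already loses constant factors via Markov's inequality in Lemma~\ref{lem:combine}), but it means you have proved a strictly weaker statement than the lemma. You also lean on Lemma~\ref{lem:event} to control $\Pr(B)$, which is established only afterwards; that is not a logical circularity in the paper's overall argument, but it does make your version of this lemma non-self-contained in a way the paper's proof is not. To match the lemma exactly by your method you would either need $\Pr(B)=1$, or you would need to restate the lemma with a harmless extra constant in front, and note that Lemma~\ref{lem:combine} is unaffected. Finally, your subsidiary observation that the sign of $\inner{\bw_t,\bv_1}$ cannot flip on $B_t$ (because each update moves $\bw_t$ by at most $O(\eta)$) is correct and addresses a genuine mismatch between the hypothesis of Lemma~\ref{lem:recur} ($\inner{\bw_t,\bv_1}\ge 1/2$) and the condition $b_t\le 3/4$ used here; the paper makes this same argument, but defers it to Lemma~\ref{lem:event}.
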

\begin{proof}
  Recall that $b_t$ is a deterministic function of the random variable
  $\bw_t$, which depends in turn on $\bw_{t-1}$ and the random instance chosen at round
  $m$. We assume that $\bw_0$ (and hence $\tilde{b}$) are fixed, and consider how $b_t$ evolves as a function of $t$. Using \eqref{eq:bform}, we have
  \begin{align*}
  \E[b_{t+1}|\bw_{t},B] = \E\left[b_{t+1}|\bw_t,b_{t+1}\leq \frac{3}{4}\right]
  ~\leq~ \E[b_{t+1}|\bw_t] ~\leq~ \left(1-\frac{\alpha}{16}\lambda^2\right)b_t
~+~c\alpha^2\lambda^2\tilde{b}.
\end{align*}
Note that the first equality holds, since conditioned on $\bw_t$, $b_{t+1}$
is independent of $b_1,\ldots,b_{t}$, so the event $B$ is equivalent to just
requiring $b_{t+1}\leq 3/4$.

Taking expectation over $\bw_t$ (conditioned on $B$), we get that
\begin{align*}
  \E[b_{t+1}|B] ~&\leq~ \E\left[\left(1-\frac{\alpha}{16}\lambda^2\right)b_{t}
+c\alpha^2\lambda^2\tilde{b}\middle| B\right]\\
&=\left(1-\frac{\alpha}{16}\lambda^2\right)\E\left[b_{t}|B\right]
+c\alpha^2\lambda^2\tilde{b}.
\end{align*}
Unwinding the recursion, and using that $b_0=\tilde{b}$, we therefore get
that
\begin{align*}
\E[b_{m}|B]~&\leq~\left(1-\frac{\alpha}{16}\lambda^2\right)^{m}\tilde{b}+c\alpha^2\lambda^2\tilde{b}\sum_{i=0}^{m-1}\left(1-\frac{\alpha}{16}\lambda^2\right)^i\\
&\leq~\left(1-\frac{\alpha}{16}\lambda^2\right)^{m}\tilde{b}+c\alpha^2\lambda^2\tilde{b}\sum_{i=0}^{\infty}\left(1-\frac{\alpha}{16}\lambda^2\right)^i\\
&=~\left(1-\frac{\alpha}{16}\lambda^2\right)^{m}\tilde{b}+c\alpha^2\lambda^2\tilde{b}\frac{1}{(\alpha/16)\lambda^2}\\
&=~\left(\left(1-\frac{\alpha}{16}\lambda^2\right)^{m}+c\alpha\right) \tilde{b}.\\
\end{align*}
as required.
\end{proof}

We now turn to prove that the event $B$ assumed in \lemref{lem:recurse}
indeed holds with high probability:
\begin{lemma}\label{lem:event}
  For certain positive numerical constants $c$,
  suppose that $\alpha\leq c$, and $\inner{\tilde{\bw},\bv_1}\geq 0$. Then for any $\beta\in (0,1)$ and $m$, if
  \begin{equation}\label{eq:event}
  \tilde{b}+cm\alpha^2\lambda^2 +c\sqrt{m\alpha^2\lambda^2\log(1/\beta)}\leq \frac{3}{4},
  \end{equation}
  for a certain numerical constant $c$, then it holds with probability at least $1-\beta$ that
  \[
  b_t~\leq~ \tilde{b}+cm\alpha^2\lambda^2 +c\sqrt{m\alpha^2\lambda^2\log(1/\beta)}~\leq~ \frac{3}{4}
  \]
  for some numerical constant $c$ and for all $t=0,1,2,\ldots,m$, as well as
  $\inner{\bw_m,\bv_1}\geq 0$.
\end{lemma}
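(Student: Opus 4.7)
The plan is to treat $b_t$ as a stochastic process to which \lemref{lem:recur} supplies a favourable drift, and then concentrate the associated martingale remainder via Azuma--Hoeffding, shielded by a stopping time that keeps us in the region where \lemref{lem:recur} applies. Introduce $\tau = \min\{0 \leq t \leq m : b_t > 3/4\}$, with $\tau = m+1$ on the empty set. The assumptions $\inner{\tilde{\bw},\bv_1}\ge 0$ and $b_0 = \tilde{b} \le 3/4$ (from \eqref{eq:event}) force $\inner{\bw_0,\bv_1} \ge 1/2$. Inductively, as long as $t<\tau$ and $\inner{\bw_t,\bv_1}\ge 1/2$, \lemref{lem:recur} yields
\[
\E[b_{t+1} - b_t \mid \bw_t] \;\le\; -\tfrac{\alpha}{16}\lambda^2\, b_t + c\alpha^2\lambda^2\tilde{b} \;\le\; c\alpha^2\lambda^2
\]
(using $\tilde{b}\le 1$), and a direct calculation on the update equation gives $\|\bw'_{t+1}-\bw_t\|\le c\eta$, hence $\|\bw_{t+1}-\bw_t\|\le c\alpha\lambda$ after normalization, hence $|b_{t+1}-b_t|\le 2\|\bw_{t+1}-\bw_t\|\le c\alpha\lambda$. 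Choosing $\alpha$ so small that $c\alpha\lambda<1/4$, the value $\inner{\bw_t,\bv_1}$ cannot flip from $\ge 1/2$ to $\le 0$ in a single step, so the sign hypothesis propagates for all $t<\tau$.

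Next, define the stopped martingale $M_t = \sum_{s=1}^{t\wedge\tau}\bigl(b_s - \E[b_s\mid \bw_{s-1}]\bigr)$, whose increments are bounded almost surely by $c\alpha\lambda$. Doob's maximal inequality applied to the exponential submartingale $e^{\gamma M_t}$, together with the Hoeffding moment-generating-function bound for bounded differences, yields
\[
\Pr\!\left(\max_{0\le t\le m} M_t \ge c\sqrt{m\alpha^2\lambda^2\log(1/\beta)}\right) \;\le\; \beta.
\]
On the complementary event, telescoping the drift inequality gives, for every $t\le m$,
\[
b_{t\wedge\tau} \;\le\; \tilde{b} + (t\wedge\tau)\,c\alpha^2\lambda^2 + M_t \;\le\; \tilde{b} + cm\alpha^2\lambda^2 + c\sqrt{m\alpha^2\lambda^2\log(1/\beta)},
\]
which by the hypothesis \eqref{eq:event} is bounded by $3/4$. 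Since $b_\tau > 3/4$ whenever $\tau\le m$, this forces $\tau>m$ on this event, so $b_t = b_{t\wedge\tau}$ and the asserted bound on $b_t$ holds for every $t\le m$; the sign induction then delivers $\inner{\bw_m,\bv_1}\ge 1/2 > 0$.

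The main obstacle I expect is the entanglement of the stopping-time argument with the sign-preservation induction: \lemref{lem:recur} presupposes $\inner{\bw_t,\bv_1}\ge 1/2$, but the natural stopping rule is phrased only in terms of $b_t$, so one must exploit the per-step displacement bound $\|\bw_{t+1}-\bw_t\|\le c\alpha\lambda < 1/4$ to transfer $\{b_t\le 3/4\}$ into the sign statement at every step simultaneously. Beyond this coupling, the argument is a fairly standard bounded-difference concentration, with the minor constants coming from $\|\bx\bx^\top-A\|\le 2$, $\|\bw_t-\tilde{\bw}\|\le 2$, and the requirement that $\alpha$ be sufficiently small.
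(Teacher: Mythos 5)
Your proof follows the same route as the paper's: a per-step drift bound from \lemref{lem:recur}, a bounded-difference estimate $|b_{t+1}-b_t|\le c\alpha\lambda$, the maximal Hoeffding--Azuma inequality to control the sum of the martingale increments uniformly over $t\le m$, and the per-step displacement bound $\norm{\bw_{t+1}-\bw_t}\le c\alpha\lambda$ to keep $\inner{\bw_t,\bv_1}$ from crossing zero. The one genuine difference is execution: by introducing the stopping time $\tau=\min\{t:b_t>3/4\}$ and the stopped martingale $M_t=\sum_{s\le t\wedge\tau}(b_s-\E[b_s\mid\bw_{s-1}])$, you make rigorous two things the paper handles informally --- (i) that the drift bound from \lemref{lem:recur} is only available while $b_t\le 3/4$ \emph{and} $\inner{\bw_t,\bv_1}\ge 1/2$, and (ii) that the high-probability bound on $b_t$ and the hypothesis of \lemref{lem:recur} must be established simultaneously, not sequentially. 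The paper writes ``as long as the expression above is less than $3/4$'' and checks the sign of $\inner{\bw_m,\bv_1}$ only at the end by contradiction; your forward induction propagating $\inner{\bw_t,\bv_1}\ge 1/2$ for $t<\tau$, combined with the telescoping-and-stop argument that forces $\tau>m$, closes this loop cleanly. The obstacle you flagged --- that the stopping rule is phrased in $b_t$ while \lemref{lem:recur} needs the signed inner product --- is exactly the point the paper glosses over, and your small-step coupling ($c\alpha\lambda<1/4$ prevents a sign flip when $|\inner{\bw_t,\bv_1}|\ge 1/2$) resolves it correctly.
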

\begin{proof}
 To prove the lemma, we analyze the stochastic process $b_0(=\tilde{b}),b_1,b_2,\ldots,b_m$, and
 use a concentration of measure argument. First, we collect the following
 facts:
 \begin{itemize}
     \item \emph{$\tilde{b}=b_0\leq \frac{3}{4}$}: This directly follows
         from the assumption stated in the lemma.
     \item \emph{The conditional expectation of $b_{t+1}$ is close to
         $b_t$, as long as $b_t\leq \frac{3}{4}$ }: Supposing that $b_t\leq
         \frac{3}{4}$ for some $t$, and $\alpha$ is sufficiently small,
         then by \eqref{eq:bform},
 \begin{align*}
 \E\left[b_{t+1}\middle| \bw_t\right] ~&\leq~
\left(1-\frac{\alpha}{16}\lambda^2\right)b_t
+c\alpha^2\lambda^2\tilde{b}
~\leq~ b_t+c\alpha^2\lambda^2 \tilde{b}.
 \end{align*}
    \item \emph{$|b_{t+1}-b_t|$ is bounded by $c\alpha\lambda$}: Since the
        norm of $\bw_t,\bv_1$ is $1$, we have
 \begin{align*}
 |b_{t+1}-b_{t}|~&=~
 \left|\inner{\bw_{t+1},\bv_1}^2-\inner{\bw_{t},\bv_1}^2\right|~=~
 \left|\inner{\bw_{t+1},\bv_1}+\inner{\bw_{t},\bv_1}\right|*\left|\inner{\bw_{t+1},\bv_1}-\inner{\bw_{t},\bv_1}\right|\\
 &\leq 2\left|\inner{\bw_{t+1},\bv_1}-\inner{\bw_{t},\bv_1}\right|~\leq~
 2\norm{\bw_{t+1}-\bw_t}.
 \end{align*}
  Recalling the definition of $\bw_{t+1}$ in our algorithm, and the fact
 that the instances $\bx_i$ and hence the matrix $A$ are assumed to have
 norm at most $1$, it is easy to verify that $\norm{\bw_{t+1}-\bw_t}\leq
 c\eta\leq c\alpha\lambda$ for some appropriate constant $c$.
 \end{itemize}
 Armed with these facts, and using the maximal version of the Hoeffding-Azuma inequality \cite{hoeffding1963probability}, it follows that with probability at least
 $1-\beta$, it holds simultaneously for all $t=1,\ldots,m$ (and for $t=0$ by assumption) that
 \[
 b_t\leq \tilde{b}+mc\alpha^2\lambda^2 \tilde{b}+c\sqrt{m\alpha^2\lambda^2\log(1/\beta)}
 \]
 for some constants $c$, as long as the expression above is less than
 $\frac{3}{4}$. If the expression is indeed less than $\frac{3}{4}$, then we
 get that $b_t\leq \frac{3}{4}$ for all $t$. Upper bounding $\tilde{b}$ and $\lambda$ by $1$, and slightly simplifying, we get the
 statement in the lemma.

 It remains to prove that if $b_t\leq \frac{3}{4}$ for all $t$, then
 $\inner{\bw_m,\bv_1}\geq 0$. Suppose on the
 contrary that $\inner{\bw_m,\bv_1}<0$. Since
 $|\inner{\bw_{t+1},\bv_1}-\inner{\bw_{t},\bv_1}|\leq
 \norm{\bw_{t+1}-\bw_{t}}\leq c\alpha\lambda$ as we've seen earlier, and $\inner{\bw_0,\bv_1}\geq 0$, it means there must have been some $\bw_t$ such that
 $\inner{\bw_t,\bv_1}\leq c\alpha\lambda$. But this means that
 $b_t=(1-\inner{\bw_t,\bv_1}^2)\geq 1-c^2\alpha^2\lambda^2 >
 \frac{3}{4}$ (as long as $\alpha$ is sufficiently small, since we assume $\lambda$ is bounded), invalidating the assumption that $b_t\leq \frac{3}{4}$ for all $t$. Therefore, $\inner{\bw_m,\bv_1}\geq 0$ as required.
\end{proof}

Combining \lemref{lem:recurse} and \lemref{lem:event}, and using Markov's
inequality, we get the following corollary:

\begin{lemma}\label{lem:combine}
Let confidence parameters $\beta,\gamma\in(0,1)$ be fixed. Suppose that
$\inner{\tilde{\bw},\bv_1}\geq 0$, and that $m,\alpha$ are chosen such that
\[
\tilde{b}+cm\alpha^2\lambda^2+c\sqrt{m\alpha^2\lambda^2\log(1/\beta)}\leq \frac{3}{4}
\]
for a certain numerical constant $c$. Then with probability at least
$1-(\beta+\gamma)$, it holds that $\inner{\bw_m,\bv_1}\geq 0$, and
\[
b_m \leq \frac{1}{\gamma}
\left(\left(1-\frac{\alpha}{16}\lambda^2\right)^{m}+c\alpha\right) \tilde{b}.
\]
for some numerical constant $c$.
\end{lemma}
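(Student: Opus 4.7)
The plan is a short combination of the two preceding lemmas with a conditional Markov step and a union bound, so the proof should be essentially routine given what has been established.

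First, I would invoke \lemref{lem:event}: under the stated hypothesis that $\tilde{b}+cm\alpha^2\lambda^2+c\sqrt{m\alpha^2\lambda^2\log(1/\beta)}\leq \tfrac{3}{4}$ (and $\alpha$ sufficiently small), the event $B=\{b_t\leq \tfrac{3}{4}\text{ for all }t=0,\ldots,m\}$ holds with probability at least $1-\beta$, and on $B$ the lemma also guarantees $\inner{\bw_m,\bv_1}\geq 0$. This immediately delivers the sign claim $\inner{\bw_m,\bv_1}\geq 0$ on the good event, so the only remaining task is to bound $b_m$.

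Next, I would invoke \lemref{lem:recurse}, which states that conditional on the event $B$ (and on the initial vector $\bw_0=\tilde{\bw}$), we have the expectation bound
\[
\E[b_m\mid B,\bw_0]\leq \left(\left(1-\tfrac{\alpha}{16}\lambda^2\right)^m+c\alpha\right)\tilde{b}.
\]
Since $b_m\geq 0$, I can then apply Markov's inequality to the non-negative random variable $b_m$ conditioned on $B$: with (conditional) probability at least $1-\gamma$,
\[
b_m \;\leq\; \frac{1}{\gamma}\,\E[b_m\mid B,\bw_0] \;\leq\; \frac{1}{\gamma}\left(\left(1-\tfrac{\alpha}{16}\lambda^2\right)^m+c\alpha\right)\tilde{b}.
\]

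Finally, I would close by a union bound. The failure event is contained in $B^c \cup \{B \text{ and Markov fails}\}$, whose probability is at most $\beta + (1-\beta)\gamma \leq \beta+\gamma$. On the complementary event, both $\inner{\bw_m,\bv_1}\geq 0$ and the desired bound on $b_m$ hold simultaneously, which is exactly the conclusion of the lemma. There is no real obstacle here; the only mild subtlety is to make sure that the Markov step is applied to the conditional distribution of $b_m$ given $B$ (so that the $\E[b_m\mid B,\bw_0]$ bound from \lemref{lem:recurse} is the quantity being used), and to verify that the two probabilistic bounds combine additively rather than multiplicatively, which they do via the trivial inequality $\beta+(1-\beta)\gamma\leq \beta+\gamma$.
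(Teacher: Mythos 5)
Your proof is correct and follows exactly the route the paper intends: the paper gives \lemref{lem:combine} as an immediate corollary of \lemref{lem:recurse} and \lemref{lem:event} via Markov's inequality, and you have spelled out the conditional Markov step and the union bound carefully (including noting that on $B$ the sign condition $\inner{\bw_m,\bv_1}\geq 0$ is guaranteed by \lemref{lem:event}, and that $b_m\geq 0$ justifies Markov).
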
%

\subsection*{Part III: Analyzing the Entire Algorithm's Run}

Given the analysis in \lemref{lem:combine} for a single epoch, we are now
ready to prove our theorem. Let
\[
\tilde{b}_s = 1-\inner{\tilde{\bw}_s,\bv_1}^2.
\]
By assumption, at the beginning of the first epoch, we have
$\tilde{b}_0=1-\inner{\tilde{\bw}_0,\bv_1}^2\leq 1-\frac{1}{2}=\frac{1}{2}$.
Therefore, by \lemref{lem:combine}, for any
$\beta,\gamma\in\left(0,\frac{1}{2}\right)$, if we pick any
\begin{equation}\label{eq:condme}
\alpha\leq \frac{1}{2}\gamma^2~~~~\text{and}~~~~
m\geq \frac{48\log(1/\gamma)}{\alpha\lambda^2}
~~~~\text{such that}~~~~\frac{1}{2}+cm\alpha^2\lambda^2+c\sqrt{m\alpha^2\lambda^2\log(1/\beta)}\leq \frac{3}{4},
\end{equation}
then we get with probability at least $1-(\beta+\gamma)$ that
\[
\tilde{b}_1~\leq~
\frac{1}{\gamma}\left(\left(1-\frac{\alpha\lambda^2}{16}\right)^{\frac{48\log(1/\gamma)}{\alpha\lambda^2}}
+\frac{1}{2}\gamma^2\right)\tilde{b}_0
\]
Using the inequality $(1-(1/x))^{ax}\leq \exp(-a)$, which holds for any $x>1$
and any $a$, and taking $x = 16/(\alpha\lambda^2)$ and $a = 3\log(1/\gamma)$,
we can upper bound the above by
\begin{align*}
&\frac{1}{\gamma}\left(\exp\left(-3\log\left(\frac{1}{\gamma}\right)\right)+\frac{1}{2}\gamma^2\right)\tilde{b}_0\\
&=~ \frac{1}{\gamma}\left(\gamma^3+\frac{1}{2}\gamma^2\right)\tilde{b}_0 ~\leq~ \gamma\tilde{b}_0.
\end{align*}
Therefore, we get that $\tilde{b}_1\leq \gamma\tilde{b}_0$. Moreover, again
by \lemref{lem:combine}, we have $\inner{\tilde{\bw}_1,\bv_1}\geq 0$. Since
$\tilde{b}_1$ is only smaller than $\tilde{b}_0$, the conditions of
\lemref{lem:combine} are fulfilled for $\tilde{b}=\tilde{b}_1$, so again with
probability at least $1-(\beta+\gamma)$, by the same calculation, we have
\[
\tilde{b}_2~\leq~ \gamma
\tilde{b}_1~\leq~ \gamma^2\tilde{b}_0.
\]
Repeatedly applying \lemref{lem:combine} and using a union bound, we get that
after $T$ epochs, with probability at least $1-T(\beta+\gamma)$,
\[
1-\inner{\tilde{\bw}_T,\bv_1}^2~=~\tilde{b}_T ~\leq~ \gamma^T\tilde{b}_0 ~<~ \gamma^T.
\]
Therefore, for any desired accuracy parameter $\epsilon$, we simply need to
use $T=\left\lceil\frac{\log(1/\epsilon)}{\log(1/\gamma)}\right\rceil$
epochs, and get $1-\inner{\tilde{\bw}_T,\bv_1}^2\leq \epsilon$ with
probability at least
$1-T(\beta+\gamma)=1-\left\lceil\frac{\log(1/\epsilon)}{\log(1/\gamma)}\right\rceil(\beta+\gamma)$.

Using a confidence parameter $\delta$, we pick
$\beta=\gamma=\frac{\delta}{2}$, which ensures that the accuracy bound above
holds with probability at least
\[
1-\left\lceil\frac{\log(1/\epsilon)}{\log(2/\delta)}\right\rceil\delta
~\geq~
1-\frac{\log(1/\epsilon)}{\log(2/\delta)}\delta
~\geq~
1-2\log\left(\frac{1}{\epsilon}\right)\delta.
\]
Substituting this choice of $\beta,\gamma$ into \eqref{eq:condme}, and
recalling that the step size $\eta$ equals $\alpha\lambda$, we get that
$\inner{\tilde{\bw}_T,\bv_1}^2\geq 1-\epsilon$ with probability at least
$1-2\log(1/\epsilon)\delta$, provided that
\[
\eta \leq c\delta^2\lambda~~~~,~~~~
m\geq \frac{c\log(2/\delta)}{\eta \lambda}
~~~~,~~~~m\eta^2+\sqrt{m\eta^2\log(2/\delta)}\leq c
\]
for suitable constants $c$.

To get the theorem statement, recall that this analysis pertains to data
whose squared norm is bounded by $1$. By the reduction discussed at the
beginning of the proof,
 we can apply it to data with squared norm at most $r$, by replacing $\lambda$ with $\lambda/r$,
 and $\eta$ with $\eta r$, leading to the condition
\[
\eta \leq \frac{c\delta^2}{r^2}\lambda~~~~,~~~~
m\geq \frac{c\log(2/\delta)}{\eta \lambda}
~~~~,~~~~m\eta^2r^2+r\sqrt{m\eta^2\log(2/\delta)}\leq c.
\]
Recalling that the different $c$'s above correspond to possibly different
positive numerical constants, we get the result stated in the theorem.

%

\section{Discussion}

In this paper, we presented and analyzed a stochastic algorithm for PCA and
SVD with an exponential convergence rate. Under suitable assumptions, the
runtime scales as the \emph{sum} of the data size $n$ and an eigengap factor
$\frac{1}{\lambda^2}$, and \emph{logarithmically} in the required accuracy
$\epsilon$. In contrast, the runtime of previous iterative methods scale
either as the \emph{product} of $n$ and an eigengap factor, or
\emph{polynomially} in $\epsilon$.

This work leaves several open questions. First, we note that in the regime of
moderate data size $n$ (in particular, when $n$ is dominated by
$(r/\lambda)^2$), the required runtime scales with $1/\lambda^2$, which is
inferior to the deterministic methods discussed in \secref{sec:introduction}.
Second, in the context of strongly convex optimization problems, the
variance-reduced technique we use leads to algorithms with runtime $
\Ocal\left(d\left(n+\frac{1}{\lambda}\right)\log\left(\frac{1}{\epsilon}\right)\right),
$ where $\lambda$ is the strong convexity parameter of the problem
\cite{johnson2013accelerating}. Comparing this with our algorithm's runtime,
and drawing a parallel between strong convexity and the eigengap in PCA
problems, it is tempting to conjecture that the $1/\lambda^2$ in our runtime
analysis can be improved at least to $1/\lambda$. However, we don't know if
this is true, or whether the $1/\lambda^2$ factor is necessary in our
setting. Third, it remains to analyze the behavior of the algorithm starting
from a randomly initialized point, before we obtain some $\tilde{\bw}_0$
sufficiently close to the optimum. Experimentally, this does not seem to be
an issue, but a full analysis would be more satisfactory, and might give more
guidance on how to optimally choose the step size. Finally, we believe our
formal analysis should be extendable to the $k>1$ case (see remark
\ref{remark:k1}), and that the dependence on the maximal squared norm of the
data can be relaxed to a dependence on the average squared norm or some
weaker moment conditions.

\subsubsection*{Acknowledgments}
This research is supported in part by an FP7 Marie Curie CIG grant, the Intel
ICRI-CI Institute, and Israel Science Foundation grant 425/13. We thank Huy
Nguyen, Mingyi Hong and Haishan Ye for spotting a bug in the proof of lemma 1 in an earlier version of this paper.

\bibliographystyle{plain}
\bibliography{mybib}

\begin{thebibliography}{10}

\bibitem{ACLS12}
R.~Arora, A.~Cotter, K.~Livescu, and N.~Srebro.
\newblock Stochastic optimization for {PCA} and {PLS}.
\newblock In {\em 2012 50th Annual Allerton Conference on Communication,
  Control, and Computing}, 2012.

\bibitem{arora2013stochastic}
R.~Arora, A.~Cotter, and N.~Srebro.
\newblock Stochastic optimization of {PCA} with capped {MSG}.
\newblock In {\em NIPS}, 2013.

\bibitem{balsubramani2013fast}
A.~Balsubramani, S.~Dasgupta, and Y.~Freund.
\newblock The fast convergence of incremental {PCA}.
\newblock In {\em NIPS}, 2013.

\bibitem{de2014global}
C.~De~Sa, K.~Olukotun, and C.~R{\'e}.
\newblock Global convergence of stochastic gradient descent for some nonconvex
  matrix problems.
\newblock {\em arXiv preprint arXiv:1411.1134}, 2014.

\bibitem{FrosGeKaSi14}
R.~Frostig, R.~Ge, S.~Kakade, and A.~Sidford.
\newblock Competing with the empirical risk minimizer in a single pass.
\newblock {\em CoRR}, abs/1412.6606, 2014.

\bibitem{GolvaL12}
G.~Golub and C.~van Loan.
\newblock {\em Matrix computations {(4.} ed.)}.
\newblock Johns Hopkins University Press, 2013.

\bibitem{hardt2014noisy}
M.~Hardt and E.~Price.
\newblock The noisy power method: A meta algorithm with applications.
\newblock In {\em NIPS}, 2014.

\bibitem{hoeffding1963probability}
W.~Hoeffding.
\newblock Probability inequalities for sums of bounded random variables.
\newblock {\em Journal of the American statistical association},
  58(301):13--30, 1963.

\bibitem{johnson2013accelerating}
R.~Johnson and T.~Zhang.
\newblock Accelerating stochastic gradient descent using predictive variance
  reduction.
\newblock In {\em NIPS}, 2013.

\bibitem{KonRi13}
J.~Konecn{\'{y}} and P.~Richt{\'{a}}rik.
\newblock Semi-stochastic gradient descent methods.
\newblock {\em CoRR}, abs/1312.1666, 2013.

\bibitem{krasulina1969method}
T.P. Krasulina.
\newblock The method of stochastic approximation for the determination of the
  least eigenvalue of a symmetrical matrix.
\newblock {\em USSR Computational Mathematics and Mathematical Physics},
  9(6):189--195, 1969.

\bibitem{MahZha13}
M.~Mahdavi, L.~Zhang, and R.~Jin.
\newblock Mixed optimization for smooth functions.
\newblock In {\em NIPS}, 2013.

\bibitem{mitliagkas2013memory}
I.~Mitliagkas, C.~Caramanis, and P.~Jain.
\newblock Memory limited, streaming {PCA}.
\newblock In {\em NIPS}, 2013.

\bibitem{oja1982simplified}
E.~Oja.
\newblock Simplified neuron model as a principal component analyzer.
\newblock {\em Journal of mathematical biology}, 15(3):267--273, 1982.

\bibitem{oja1985stochastic}
E.~Oja and J.~Karhunen.
\newblock On stochastic approximation of the eigenvectors and eigenvalues of
  the expectation of a random matrix.
\newblock {\em Journal of mathematical analysis and applications},
  106(1):69--84, 1985.

\end{thebibliography}

\appendix

\section{Implementing Epochs in $\Ocal(d_s(m+n))$ Amortized
Runtime}\label{app:sparse}

As discussed in remark \ref{remark:sparse}, the runtime of each iteration in
our algorithm (as presented in our pseudo-code) is $\Ocal(d)$, and the total
runtime of each epoch is $\Ocal(dm+d_s n)$, where $d_s$ is the average
sparsity (number of non-zero entries) in the data points $\bx_i$. Here, we
explain how the total epoch runtime can be improved (at least in terms of the
theoretical analysis) to $\Ocal(d_s(m+n))$. For ease of exposition, we
reproduce the pseudo-code together with line numbers below:

\begin{center}
\begin{minipage}{0.7\textwidth}
\begin{algorithmic}[1]
\STATE \textbf{Parameters:} Step size $\eta$, epoch length $m$ \STATE
\textbf{Input:} Data matrix $X=(\bx_1,\ldots,\bx_n)$; Initial unit vector
$\tilde{\bw}_0$ \FOR{$s=1,2,\ldots$}
  \STATE $\tilde{\bu}=\frac{1}{n}\sum_{i=1}^{n}\bx_i\left(\bx_i^\top \tilde{\bw}_{s-1}\right)$
  \STATE $\bw_0=\tilde{\bw}_{s-1}$
  \FOR{$t=1,2,\ldots,m$}
    \STATE Pick $i_t\in \{1,\ldots,n\}$ uniformly at random
    \STATE $\bw'_{t}=\bw_{t-1}+\eta\left(\bx_{i_t}\left(\bx_{i_t}^\top\bw_{t-1}-\bx_{i_t}^\top\tilde{\bw}_{s-1}\right)+\tilde{\bu}\right)$
    \STATE $\bw_{t}=\frac{1}{\norm{\bw'_t}}\bw'_{t}$
  \ENDFOR
  \STATE $\tilde{\bw}_{s}=\bw_m$
\ENDFOR
\end{algorithmic}
\end{minipage}
\end{center}

First, we can assume without loss of generality that $d \leq d_s n$.
Otherwise, the number of non-zeros in the $n\times d$ data matrix $X$ is
smaller than $d$, so the matrix must contain some all-zeros columns. But
then, we can simply drop those columns (the value of the largest singular
vectors in the corresponding entries will be zero anyway), hence reducing the
effective dimension $d$ to be at most $d_s n$. Therefore, given a vector
$\tilde{\bw}_{s-1}$, we can implement line (4) in $\Ocal(d+d_s n)\leq
\Ocal(d_s n)$ time, by initializing the $d$-dimensional vector $\tilde{\bu}$
to be $0$, and iteratively adding to it the sparse (on-average) vector
$\bx_i\left(\bx_i^\top \tilde{\bw}_{s-1}\right)$. Similarly, we can implement
lines (5),(11) in $\Ocal(d)\leq \Ocal(d_s n)$ time.

It remains to show that we can implement each iteration in lines (8) and (9)
in $\Ocal(d_s)$ time. To do so, instead of explicitly storing $\bw_t,\bw'_t$,
we only store $\tilde{\bu}$, an auxiliary vector $\bg$, and auxiliary scalars
$\alpha,\beta, \gamma,\delta,\zeta$, such that
\begin{itemize}
    \item At the end of line (8), $\bw'_t$ is stored as
        $\alpha\bg+\beta\tilde{\bu}$
    \item At the end of line (9), $\bw_t$ is stored as
        $\alpha\bg+\beta\tilde{\bu}$
    \item It holds that $\gamma = \norm{\alpha \bg}^2~,~ \delta =
        \inner{\alpha \bg,\tilde{\bu}}~,~ \zeta = \norm{\tilde{\bu}}^2$.
    This ensures that $\gamma+2\delta+\zeta$ expresses
        $\norm{\alpha\bg+\beta\tilde{\bu}}^2$.
\end{itemize}

Before the beginning of the epoch (line (5)), we initialize
$\bg=\tilde{\bw}_{s-1}$, $\alpha=1,\beta=0$ and compute $\gamma =
\norm{\alpha \bg}^2~,~ \delta =\inner{\alpha \bg,\tilde{\bu}}~,~ \zeta =
\norm{\tilde{\bu}}^2$, all in time $\Ocal(d)\leq \Ocal(d_s n)$. This ensures
that $\bw_0 = \alpha \bg+\beta\bu$. Line (8) can be implemented in
$\Ocal(d_s)$ time as follows:
\begin{itemize}
  \item Compute the sparse (on-average) update vector $\Delta \bg
      :=\eta\bx_{i_t}\left(\bx_{i_t}^\top\bw_{t-1}-\bx_{i_t}^\top\tilde{\bw}_{s-1}\right)$
  \item Update $\bg:=\bg+\Delta\bg/\alpha$; $\beta:=\beta+\eta$;
      $\gamma:=\gamma+2\alpha\inner{\bg,\Delta \bg}+\norm{\Delta \bg}^2$;
      $\delta:=\delta+\inner{\Delta \bg,\tilde{\bu}}$. This implements line
      (8), and ensures that $\bw'_t$ is represented as $\alpha \bg+\beta
      \bu$, and its squared norm equals $\gamma+2\delta+\zeta$.
\end{itemize}
To implement line (9), we simply divide $\alpha,\beta$ by
$\sqrt{\gamma+2\delta+\zeta}$ (which equals the norm of $\bw'_t$), and
recompute $\gamma,\delta$ accordingly. After this step, $\bw_t$ is
represented by $\alpha \bg+\beta \tilde{\bu}$ as required.

\end{document}